\newcommand{\eg}{\textit{e.g.}}
\newcommand{\ie}{\textit{i.e.}}
\crefname{section}{Sec.}{Secs.}
\Crefname{section}{Section}{Sections}
\Crefname{table}{Table}{Tables}
\crefname{table}{Table}{Tables.}
\DeclareMathOperator*{\argmin}{arg\,min}
\newcommand{\bbm}{\begin{bmatrix}}
\newcommand{\ebm}{\end{bmatrix}}
\newcommand{\action}[0]{a}
\newcommand{\actionSpace}[0]{\mathcal{A}}
\newcommand{\policy}[0]{\pi}
\newcommand{\Loss}[0]{\mathcal{L}}
\newtheorem{theorem}{\textbf{Theorem}}
\newtheorem{proposition}[theorem]{\textbf{Proposition}}
\newtheorem{lemma}[theorem]{\textbf{Lemma}}
\newtheorem{corollary}[theorem]{\textbf{Corollary}}
\titlespacing\subsubsection{0pt}{3pt plus 4pt minus 2pt}{0pt plus 2pt minus 2pt}
\renewcommand\paragraph{\@startsection{paragraph}{4}{\z@}%
                                   {.2ex \@plus1ex \@minus.2ex}%
                                   {-1em}%
                                   {\normalfont\normalsize\bfseries}}
\colorlet{Blue}{blue!50!black}
\newcommand{\revision}[1]{{\leavevmode\color{black}{#1}}}
\newcommand{\methodname}{Bidirectional Decoding}
\newcommand{\methodacro}{BID}
\title{\fontsize{18.5}{20.5}\selectfont {\methodname}: Improving Action Chunking via Guided Test-Time Sampling
}
\author{Yuejiang Liu\thanks{{Equal contribution. Correspondence to yuejiang.liu@stanford.edu. Videos at {https://bid-robot.github.io}.}}, \hspace{0.1pt} Jubayer Ibn Hamid\footnotemark[1], \hspace{0.1pt} Annie Xie, \hspace{0.1pt} Yoonho Lee, \hspace{0.1pt} Maximilian Du, \hspace{0.1pt} Chelsea Finn\\
Department of Computer Science, Stanford University
}
\setlist[enumerate]{leftmargin=12pt}
\setlist[itemize]{leftmargin=12pt}
\begin{document}
\maketitle


\begin{abstract}
    Predicting and executing a sequence of actions without intermediate replanning, known as action chunking, is increasingly used in robot learning from human demonstrations. Yet, its effects on the learned policy remain inconsistent: some studies find it crucial for achieving strong results, while others observe decreased performance. In this paper, we first dissect how action chunking impacts the divergence between a learner and a demonstrator. We find that action chunking allows the learner to better capture the temporal dependencies in demonstrations but at the cost of reduced reactivity to unexpected states. 
To address this tradeoff, we propose {\emph{\methodname}} (\methodacro), a test-time inference algorithm that bridges action chunking with closed-loop adaptation.
At each timestep, {\methodacro} samples multiple candidate predictions and searches for the optimal one based on two criteria: (i) backward coherence, which favors samples that align with previous decisions; (ii) forward contrast, which seeks samples of high likelihood for future plans.
By coupling decisions within and across action chunks, {\methodacro} promotes both long-term consistency and short-term reactivity. 
Experimental results show that our method boosts the performance of two state-of-the-art generative policies across seven simulation benchmarks and two real-world tasks.
Videos and code are available at \url{https://bid-robot.github.io}.

\end{abstract}



\section{Introduction}
\label{sec:intro}
The increasing availability of human demonstrations has spurred renewed interest in behavioral cloning~\citep{atkesonRobotLearningDemonstration1997,argallSurveyRobotLearning2009a}. In particular, recent studies have highlighted the potential of learning from large-scale demonstrations to acquire a variety of complex skills~\citep{zhaoLearningFineGrainedBimanual2023,chiDiffusionPolicyVisuomotor2023,fuMobileALOHALearning2024,leeBehaviorGenerationLatent2024,khazatskyDROIDLargeScaleInTheWild2024}. Yet, existing methods still struggle with two common properties of human demonstrations: (i) strong temporal dependencies across multiple steps, such as idle pauses~\citep{chiDiffusionPolicyVisuomotor2023} and latent strategies~\citep{xieLearningLatentRepresentations2021,maHierarchicalDiffusionPolicy2024}, (ii) large style variability across different demonstrations, such as differences in proficiency~\citep{belkhale2024data} and preference~\citep{kuefler2017burn}. Often, both properties are prevalent yet unlabeled in collected data, posing significant challenges to the traditional behavioral cloning that maps an input state to an action.

In response to these challenges, recent works have pursued a generative approach equipped with action chunking: (i) predicting a sequence of actions over multiple time steps and executing all or part of the sequence~\citep{zhaoLearningFineGrainedBimanual2023,chiDiffusionPolicyVisuomotor2023}; (ii) modeling the distribution of action chunks and sampling from the learned model in an independent~\citep{chiDiffusionPolicyVisuomotor2023,prasadConsistencyPolicyAccelerated2024} or weakly dependent~\citep{jannerPlanningDiffusionFlexible2022,zhaoLearningFineGrainedBimanual2023} manner for sequential decisions. 
Some studies find this approach crucial for learning a performant policy in laboratory scenarios~\citep{zhaoLearningFineGrainedBimanual2023,chiDiffusionPolicyVisuomotor2023}, 
while other recent work reports opposite outcomes under practical conditions~\citep{leeBehaviorGenerationLatent2024}. The reasons behind these conflicting observations remain unclear.

In this paper, we first dissect the influence of action chunking by examining the divergence between learned policies and human demonstrations. 
We find that, when a policy is built with limited context length -- little or no history is used as input for robustness or efficiency~\citep{mandlekarIRISImplicitReinforcement2020,bharadhwajRoboAgentGeneralizationEfficiency2023,brohanRT2VisionLanguageActionModels2023,brohanRT1RoboticsTransformer2023,shiWaypointBasedImitationLearning2023,collaborationOpenXEmbodimentRobotic2023a} -- increasing the length of action chunks allows for implicit conditioning on more past actions, thereby improving its ability to capture the temporal dependencies inherent in demonstrations.
However, this advantage comes at the cost of reduced access to recent state observations, which can be crucial for reacting to unexpected dynamics arising from modeling errors or environmental stochasticity.
This tradeoff raises a crucial question: How can we preserve the strengths of action chunking in long-term consistency without suffering from its limitations in short-term reactivity?

To this end, we introduce {\em \methodname} (\methodacro), an inference algorithm that bridges action chunking with closed-loop adaptation.
Our main idea is to sample multiple predictions at each time step and search for the most desirable one.
Specifically, {\methodacro} operates on two decoding criteria: (i) backward coherence, which favors samples that are close to the sequence selected at the previous step; (ii) forward contrast, which favors samples that are close to the output of a stronger policy and distant from those of a weaker one.
As illustrated in~\cref{fig:pull}, {\methodacro} updates the chunk of future actions based on the previous strategy, promoting temporal consistency over extended periods while remaining reactive to unexpected changes.

The main contributions of this paper are twofold: (i) a thorough analysis of action chunking (\cref{sec:analysis}), and (ii) a decoding algorithm to improve it (\cref{sec:method}).
Empirically, we validate our theoretical analysis through a one-dimensional diagnostic simulation and evaluate our decoding method on two state-of-the-art generative policies across seven simulations and two real-world tasks (\cref{sec:experiment}). 
Our experiment results show that the proposed {\methodacro} boosts the performance of recent policies by more than $32\%$ in relative performance.
{\methodacro} is model-agnostic, computationally efficient, and easy to implement, serving as a plug-and-play component to enhance generative behavior cloning at test time.

\begin{figure}[t]
\centering
\vspace{-6pt}
\includegraphics[width=1.0\linewidth]{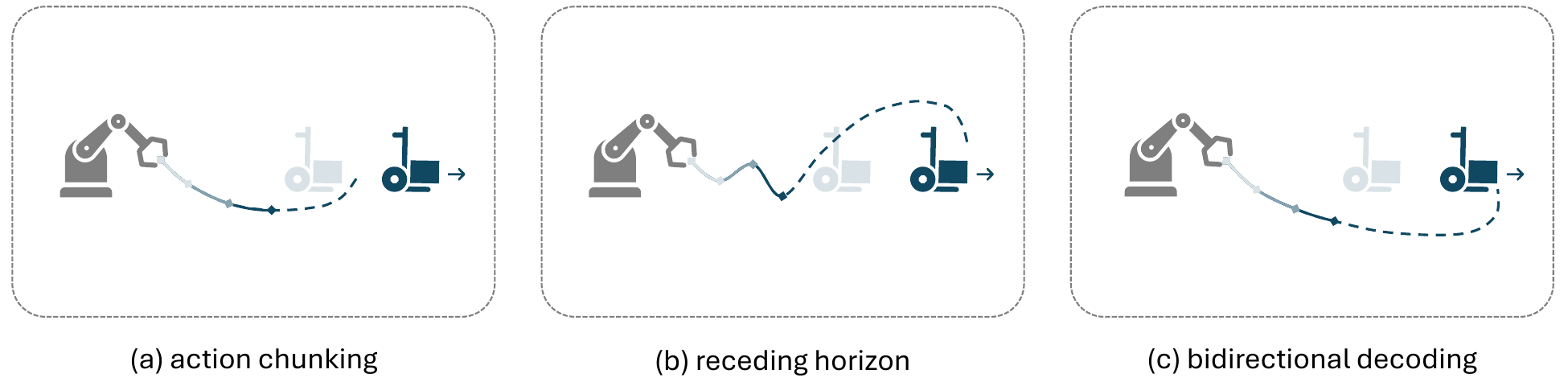}
\caption{
Illustration of different inference methods applied to a robot policy with action chunking.
The robot is tasked with catching a moving trolley.
(a) Vanilla action chunking~\citep{zhaoLearningFineGrainedBimanual2023} executes actions based on previous predictions, resulting in delayed reactions to object motions. 
(b) Receding horizon~\citep{chiDiffusionPolicyVisuomotor2023} enables faster reactions, but leads to a jittery trajectory in the presence of multimodal demonstrations (\eg, both left- and right-handers). 
(c) Our {\methodname} explicitly searches for the optimal action from multiple predictions sampled at each time step, achieving both long-term consistency and short-term reactivity.
}
\label{fig:pull}
\vspace{-6pt}
\end{figure}

\section{Related Work}
\label{sec:related}
\textbf{Behavioral Cloning.} 
Learning from human demonstrations is becoming increasingly popular in robot learning due to recent advances in robotic teleoperation interfaces~\citep{sivakumar2022robotic,zhaoLearningFineGrainedBimanual2023,wu2023gello,chi2024universal}. 
Generative Behavior cloning, which models the distribution of demonstrations, is particularly appealing due to its algorithmic simplicity and empirical efficacy~\citep{jang2022bc,florence2022implicit,brohan2022rt,shafiullah2022behavior,zhaoLearningFineGrainedBimanual2023,chi2024universal,brohanRT2VisionLanguageActionModels2023}. However, a significant limitation is compounding errors, where deviations from the training distribution accumulate over time~\citep{ross2011reduction,ke2021grasping}. These errors can be mitigated by gathering expert correction data~\citep{ross2011reduction,kelly2019hg,menda2019ensembledagger,hoque2021thriftydagger,hoque2021lazydagger} or injecting noise during data collection~\citep{laskey2017dart,brandfonbrener2023visual}, but such strategies require additional time and effort from human operators.
To address this, recent work proposes predicting a sequence of multiple actions into the future, known as action chunking, which reduces the effective control horizon~\citep{lai2022action,zhaoLearningFineGrainedBimanual2023,george2023one,bharadhwaj2023roboagent}.
By handling sequences of actions, action chunking is also better at handling temporal dependencies in the data, such as idle pauses~\citep{swamy2022causal, chiDiffusionPolicyVisuomotor2023} or multiple styles~\citep{li2017infogail,kuefler2017burn,gandhi2023eliciting,belkhale2024data}.
However, independently drawn action sequence samples may not preserve the necessary temporal dependencies for smooth and consistent execution.
Our work provides a thorough analysis of action chunking and proposes a decoding algorithm to improve it.



\textbf{Sequential Decoding.}
Decoding algorithms have been studied in generative sequence modeling for decades, with renewed attention driven by recent advances in large language modeling (LLM). 
One prominent approach focuses on leveraging internal metrics, \eg, likelihood scores, to improve the quality of generated sequences. Notable examples include beam search~\citep{freitagBeamSearchStrategies2017,vijayakumarDiverseBeamSearch2018}, truncated sampling~\citep{fanHierarchicalNeuralStory2018,hewittTruncationSamplingLanguage2022}, minimum Bayes risk decoding~\citep{kumarMinimumBayesRiskDecoding2004,mullerUnderstandingPropertiesMinimum2021}, and others~\citep{welleckNeuralTextGeneration2019,meisterLocallyTypicalSampling2023,fuBreakSequentialDependency2024}.
Another line of research explores the distinctions between multiple models to jointly optimize for the desired properties such as quality or efficiency~\citep{liContrastiveDecodingOpenended2023a,leviathanFastInferenceTransformers2023}. 
More recently, several studies have highlighted the potential of guiding the decoding or sampling process through the use of an external model, such as a classifier~\citep{dhariwal2021diffusion} or reward model~\citep{khanovARGSAlignmentRewardGuided2023}.
In the context of robot learning, recent works have explored guided decoding for long-horizon robotic planning~\citep{huangGroundedDecodingGuiding2023} and manipulator geometry designs~\citep{xuDynamicsGuidedDiffusionModel2024}. Nevertheless, effective decoding strategies for low-level robotic actions remain lacking.
Concurrent to our work, \citet{nakamotoSteeringYourGeneralists2024} propose to select action samples by querying a value function learned from reward-annotated demonstrations~\citep{hansen-estruchIDQLImplicitQLearning2023a}.
Our work does not rely on a separate value function; instead, we propose a decoding strategy that addresses the consistency-reactivity tradeoff inherent in action chunking through sample comparison.

\section{Analysis: Tradeoffs in Action Chunking}
\label{sec:analysis}
\subsection{Preliminaries}
\label{sec:background}

Consider a dataset of demonstrations $\mathcal{D} = \{ \tau_i\}_{i=1}^N$, where each demonstration $\tau_i$ consists of a sequence of state-action pairs $\tau_i = \{(s_1,a_1),(s_2,a_2),\cdots,(s_T,a_T)\}$ provided by a human expert.
At each time step $t$, the demonstrated action $a_t$ is influenced not only by the observed state $s_t$, but also by latent variables $z_t$, such as planning strategies (\eg, subgoals) and personal preferences (\eg, handedness). These latent variables can persist across multiple time steps and vary significantly between different demonstrations.
\cref{fig:expertkMDP} illustrates the decision process of a human expert, highlighting the inherent temporal dependencies.

To model these temporal dependencies, some recent works~\citep{zhaoLearningFineGrainedBimanual2023,chiDiffusionPolicyVisuomotor2023,leeBehaviorGenerationLatent2024} utilize action chunking, \ie, modeling the joint distribution of future actions conditioned on past states $\pi(a_t,a_{t+1},\cdots,a_{t+l}|s_{t-c}, \cdots, s_t)$, or in short $\pi(a_{t:t+l}|s_{t-c:t})$.
Here, $c$ is the {\em context length}, \ie, number of past steps for state inputs, and $l$ is the {\em prediction length}, \ie, number of future steps for action outputs.
Training such a policy typically involves minimizing the divergence of action distributions between the model $\pi$ and the expert $\pi^{*}$, 
\begin{equation}
    \pi = \argmin_{\pi} \sum_{\tau \in \mathcal{D}} \sum_{\substack{s_{t-c:t} \\ a_{t:t+l}}} \mathcal{L} (\pi(a_{t:t+l}|s_{t-c:t}), \pi^{*}(a_{t:t+l}|s_{t-c:t})).
\end{equation}
During deployment, the policy operates with a specific {\em action horizon} $h \in [1, l]$, \ie, executing part or all of the predicted action sequence for $h$ time steps without re-planning.
This approach essentially takes in $c$ states as context and executes $h$ actions, which we refer to as a $(c, h)$-policy.

The choices of context length $c$ and action horizon $h$ often play a crucial role in the effectiveness of the learned policy.
Recent policies often use a short context length $c$, as extending the context can lead to performance degradation in the presence of limited training data (refer to~\cref{app:horizon} for more details). 
Conversely, extending the action horizon $h$ has produced mixed results. Some studies report benefits in laboratory settings~\citep{zhaoLearningFineGrainedBimanual2023,chiDiffusionPolicyVisuomotor2023}, while others find it detrimental in real-world scenarios~\citep{leeBehaviorGenerationLatent2024}. The reasons behind these conflicting outcomes are not well understood yet.

Notably, \citet{zhaoLearningFineGrainedBimanual2023} hypothesize that action chunking mitigates compounding errors, but it is unclear why this would hold when deviations within a chunk cannot be corrected before replanning.
Alternatively, \citet{chiDiffusionPolicyVisuomotor2023} draws parallels to model predictive control (MPC)~\citep{borrelli2017predictive,lofberg2012oops}, emphasizing its role in improving consistency and planning.
However, unlike MPC, which typically uses short action horizons (\eg, $h=1$), recent imitation learning methods employ much longer action horizons, such as a substantial portion of the predicted sequence (\eg, $h=8$ in~\citet{chiDiffusionPolicyVisuomotor2023}) or even the entire prediction length (\eg, $h\ge50$ in~\citet{zhaoLearningFineGrainedBimanual2023,black2024pi_0}).
The lack of a clear understanding of action chunking hinders its effective use across tasks and environments.
We next explicitly analyze the strengths and weaknesses of action chunking, with particular attention to the choice of action horizon $h$.

\begin{figure}[t]
\centering
\vspace{-6pt}
    \begin{minipage}[t]{0.4\linewidth} \centering
        \resizebox{!}{3.8cm}{
            \usetikzlibrary{calc} 
\begin{tikzpicture}[node distance=0.5cm, every node/.style={circle, draw, minimum size=0.8cm, font=\fontsize{10}{0.2}\selectfont, inner sep=0}, thick]
\node[circle, draw] (s_t2) {$s_{t-1}$};
\node[circle, draw, below=of s_t2] (a_t2) {$a_{t-1}$};
\node[circle, draw, right=of s_t2] (s_t1) {$s_{t}$};
\node[circle, draw, right=of s_t1] (s_t) {$s_{t+1}$};
\node[circle, draw, below=of s_t1] (a_t1) {$a_t$};
\node[circle, draw, right=of a_t1] (a_t) {$a_{t+1}$};
\node[circle, dashed, red, below=of $(a_t2)!0.5!(a_t)$, yshift=-0.5cm] (u_t) {$z_t$};  
\draw[-stealth] (s_t2) -- (s_t1);
\draw[-stealth] (s_t1) -- (s_t);
\draw[-stealth] (s_t1) -- (a_t1);
\draw[-stealth] (s_t2) -- (a_t2);
\draw[-stealth] (s_t) -- (a_t);
\draw[-stealth] (a_t1) -- (s_t);
\draw[-stealth] (a_t2) -- (s_t1);
\draw[-stealth,red, dashed] (u_t) -- (a_t2);
\draw[-stealth,red, dashed] (u_t) -- (a_t1);
\draw[-stealth,red, dashed] (u_t) -- (a_t);
\end{tikzpicture}
        }
        \vspace{1.5mm}  
        \caption{Illustration of the expert decision process, where a latent variable introduces temporal dependencies in actions.
        }
        \label{fig:expertkMDP}
    \end{minipage}
    \hfill
    \begin{minipage}[t]{0.56\linewidth}
        \resizebox{!}{3.8cm}{
            \includegraphics{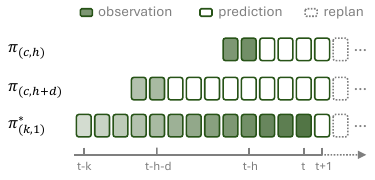}
        }
        \vspace{-2.2mm}  
        \caption{
        Illustration of $(k, 1)$-expert, $(c, h)$-learner, and $(c, h+d)$-learner.
        Shaded regions represent observed history; darker indicate greater influence on current decision.
        }
        \label{fig:LearnersL1L2}
    \end{minipage}
\vspace{-6pt}
\end{figure}

\subsection{Analysis}
\label{sec:theory}

To understand the influence of action chunking, we focus on the last time step of the executed chunk, where the discrepancy between the expert policy and the learned policy is most pronounced. At this time step $t$, a $(k,1)$-expert written as $\pi^* \coloneqq \pi^{*}(a_t|s_{t-k:t},z_{t-k:t})$ predicts $a_t$ by conditioning on $k$ steps of the past states and the corresponding latent variables. In contrast, a $(c,h)$-learner written as $\pi_{(c,h)} \coloneqq \pi_{(c, h)}(a_t \mid s_{t-h-c:t-h}, a_{t-h:t-1})$ is constrained to observe $c$ steps of the past states and $h-1$ steps of the past actions within the predicted chunk. 

The divergence between a learned policy and an expert policy is attributed to two factors: (i) the importance of unobserved states in predicting the current action, and (ii) the difficulty of inferring unobserved states based on the available information. 

To more clearly see the influence of action horizon on these factors, we next compare the performance of two policies that have the same context lengths but different action horizons,
$\pi_{h} \coloneqq \pi_{(c, h)}(a_t | s_{t-h-c:t-h}, a_{t-h:t-1})$ and $\pi_{h+d} \coloneqq \pi_{(c, h+d)}(a_t|s_{t-h-d-c:t-h-d}, a_{t-h-d:t-1})$, where $d>0$ is the extended action horizon.
As illustrated in~\cref{fig:LearnersL1L2}, each policy has access to unique information that is unavailable to the other.
$\pi_{h}$ observes some recent states, where $\pi_{h+d}$ is only aware of the executed actions. On the other hand, $\pi_{h+d}$ has access to some earlier states and actions, which precede all information available to $\pi_{h}$.
We characterize the {\it importance} of observations as follows (formal definitions in~\cref{app:theorydefinitions}): 

\textbf{Definition (Expected Observation Advantage).} If a policy can observe a state $s_t$, we say that it has an observation advantage $\alpha_t$ over another policy that cannot observe it. More formally, this is the difference between the expected divergence accumulated by a policy $\pi(a_{t'}|s_{t'})$ that does not condition on the state $s_t$ and that by a policy $\pi(a_{t'}|s_{t'}, s_t)$ that conditions on $s_t$. 

\textbf{Definition (Maximum Inference Disadvantage).} If a policy cannot condition its prediction on a state $s_t$, its maximum inference disadvantage $\epsilon_t$ is the largest possible divergence arising from inferring incorrectly. Here, the maximum is taken over all possible incorrect inferences of $s_t$. 

Hence, we denote the observation advantage that $\pi_h$ gains from the observed recent states by $\alpha_f$ and the maximum inference disadvantage it incurs from the earlier unobserved states by $\epsilon_b$, whereas $\pi_{h+d}$ conversely gains $\alpha_b$ but incurs $\epsilon_f$. 

Let $P(s_{t+1}\mid s_t, a_t)$ denote the true transition probabilities and $\hat{P}(s_{t+1}\mid s_t, a_t)$ the implicit dynamics model learned by the policy.
The \textit{difficulty} of inferring an unobserved state hinges on both the relevant observations and environmental stochasticity, which we quantify as follows.

\textbf{Definition (Forward Inference).} Let $P_f(t) \coloneqq P(S_{t}=g_t|S_{t-1}=g_{t-1},A_{t-1} = a_{t-1})$, where $g_t$ and $g_{t-1}$ are the ground truth states at time $t$ and $t-1$ respectively in a deterministic environment. In deterministic environments, $P_f(t) = 1$; whereas in stochastic settings, $P_f(t)$ is smaller. 
The prediction error is characterized by $\delta_f(t) = \hat{P}(S_{t}=g_t|S_{t-1}=g_{t-1},a_{t-1}) - P(S_{t}=g_t|S_{t-1}=g_{t-1},a_{t-1})$.

\textbf{Definition (Backward Inference).} Similarly, let $P_b(t) \coloneqq P(S_{t}=g_t|S_{t+1}=g_{t+1})$ where $g_t$ and $g_{t+1}$ are the ground truth states in the deterministic environment at time $t$ and $t+1$, respectively. Since $P_b(t)$ is not conditioned on any action, it has higher entropy in general. In stochastic environments, $P_b(t)$ is small.
Furthermore, let $\delta_b(t) = \hat{P}(S_{t}=g_t|S_{t+1}=g_{t+1}) - P(S_{t}=g_t|S_{t+1}=g_{t+1})$.

{We refer to the variance of the distribution ${P}(a_{t' - 1},\ldots,a_{t' - d} \mid s_{t'})$ under expert policy as the diversity of past strategies up to time $t'$.} Given that the forward inference is generally easier than the backward inference, the performances of $\pi_{h}$ and $\pi_{h+d}$ differ as follows (proofs are deferred to \cref{app:proof}):

\begin{proposition}[Consistency-Reactivity Inequalities]
\label{PRInequality}

Let $\mathcal{L}$ be a non-linear and non-negative convex function measuring the prediction error with respect to demonstrations. Let $C \coloneqq \{a_{t-h:t-1}\} \cup \mathcal{S}^+$ where $\mathcal{S}^+$ are the common states that both $\pi_h$ and $\pi_{h+d}$ observe. For the ease of notation, let $\tau_f = \{t-h-d+1:t-h\}$ and let $\tau_b = \{t-h-d-c:t-h-c-1\}$. Then, the difference in the expected loss between the $(c, h+d)$-policy and the $(c, h)$-policy, $\Delta_d \coloneqq \min_{\pi_{h+d}} \mathbb{E} \left[\mathcal{L}(\pi_{h+d}, \pi^*) | C \right] - \min_{\pi_{h}} \mathbb{E} \left[\mathcal{L}(\pi_{h}, \pi^*) | C \right]$, is bounded as:

\begin{align}
    \alpha_{f} - \epsilon_{b}\left(1 - \prod_{\tau \in \tau_b}P_b(\tau)(P_b(\tau) + \delta_b(\tau))\right) \leq \Delta_d \leq  \epsilon_{f}\left(1 - \prod_{\tau \in \tau_f}P_f(\tau)(P_f(\tau) + \delta_f(\tau))\right) - \alpha_{b}.
    \label{eq:tradeoff}
\end{align}

\end{proposition}

\textbf{\textit{Remark 1}.} \cref{eq:tradeoff} provides a general comparison of the performance of the two policies. Intuitively, the advantage of each policy stems from the additional information it has access to (\textit{i.e.} {$\alpha_{f}$} for $\pi_h$ and {$\alpha_{b}$} for $\pi_{h+d}$) while the disadvantage is bounded by the maximum divergence arising from inferring missing information incorrectly (\textit{i.e.} $\epsilon_{b}$ and $\epsilon_{f}$ scaled by the maximum probability of incorrect inference).

We next examine the implications of \cref{PRInequality}. 

\begin{corollary}[Consistency]
\label{PRIInequalityLowNoise}
    {Suppose the train and test environments are identical and deterministic. Suppose $a_t$ is influenced by at least one state at time steps $\tau_b$ and let $\delta_f(t') \approx 0$ for all $t' \in \tau_f$. If the diversity in past strategies up to time $t-h-c$ is not 0,} and $\epsilon_{f}$ is finite, then
    \begin{align}
    \min_{\pi_{h+d}} \mathbb{E} \left[
    \mathcal{L}(\pi_{h+d}, \pi^*)| C \right] 
    < & \min_{\pi_{h}} \mathbb{E} \left[\mathcal{L}(\pi_{h}, \pi^*)| C \right].
    \end{align}
\end{corollary}

\textbf{\textit{Remark 2.}} In deterministic environments, while both policies need to infer the same number of unobserved states, $\pi_{h+d}$ benefits from conditioning on additional actions, which may significantly aid in inferring the corresponding states through its action chunk. However, this is only true if the implicit dynamics model learned by the policy is approximately accurate and the maximum errors $\epsilon_{f}$ arising from inferring these states are bounded. Note that $\pi_{h+d}$ performs better as we increase the diversity of strategies present in the dataset \ie, as the distribution of actions taken by the expert prior to the context of the short horizon policy becomes higher variance. 

\begin{corollary}[Reactivity]
\label{PRInqualityHighNoise}
    {Suppose $P_f(t')$ is small or both $P_f(t')$ and $\left|\delta_f(t')\right|$ are large for all $t' \in \tau_f$. If temporal dependency decreases over time such that $\epsilon_b$ is small and $a_t$ is influenced by at least one state at time steps in $\tau_f$, then}
    \begin{align}
    \min_{\pi_{h+d}} \mathbb{E} \left[
    \mathcal{L}(\pi_{h+d}, \pi^*)| C \right] 
    > & \min_{\pi_{h}} \mathbb{E} \left[\mathcal{L}(\pi_{h}, \pi^*)| C \right].
    \end{align}
\end{corollary}

\textbf{\textit{Remark 3.}} Recent states are often more important to decision making and, therefore, the disadvantages of action chunking become pronounced when inferring these recent states becomes difficult. This can happen if the test environment is stochastic. This can also happen if the test environment is deterministic as long as the implicit dynamics model is inaccurate - either due to a distribution shift or because of learning difficulty.

In summary, depending on the experimental conditions, action chunking may have varying effects on the learned policy.
On the one hand, it benefits the modeling of temporal dependencies in the demonstrations, due to extended access to actions executed at past time steps. On the other hand, it hinders reactions to unexpected dynamics, due to reduced access to state observations at recent time steps. As a result, there is no universally optimal choice of action horizon across all conditions.
When both temporal dependencies and prediction errors are significant, tuning the action horizon entails an inherent trade-off between the two opposing factors.







\section{Method: {\methodname}}
\label{sec:method}
As analyzed above, action chunking improves long-term consistency but sacrifices short-term reactivity.
In this section, we propose to address this tradeoff by bridging action chunking with closed-loop adaptation.
We will first outline the general framework in~\cref{subsec:search} and then describe two specific criteria in~\cref{subsec:criteria}.

\begin{figure}[t]
\vspace{-6pt}
  \begin{minipage}[b]{0.45\textwidth}
    \centering
    \includegraphics[width=0.90\linewidth]{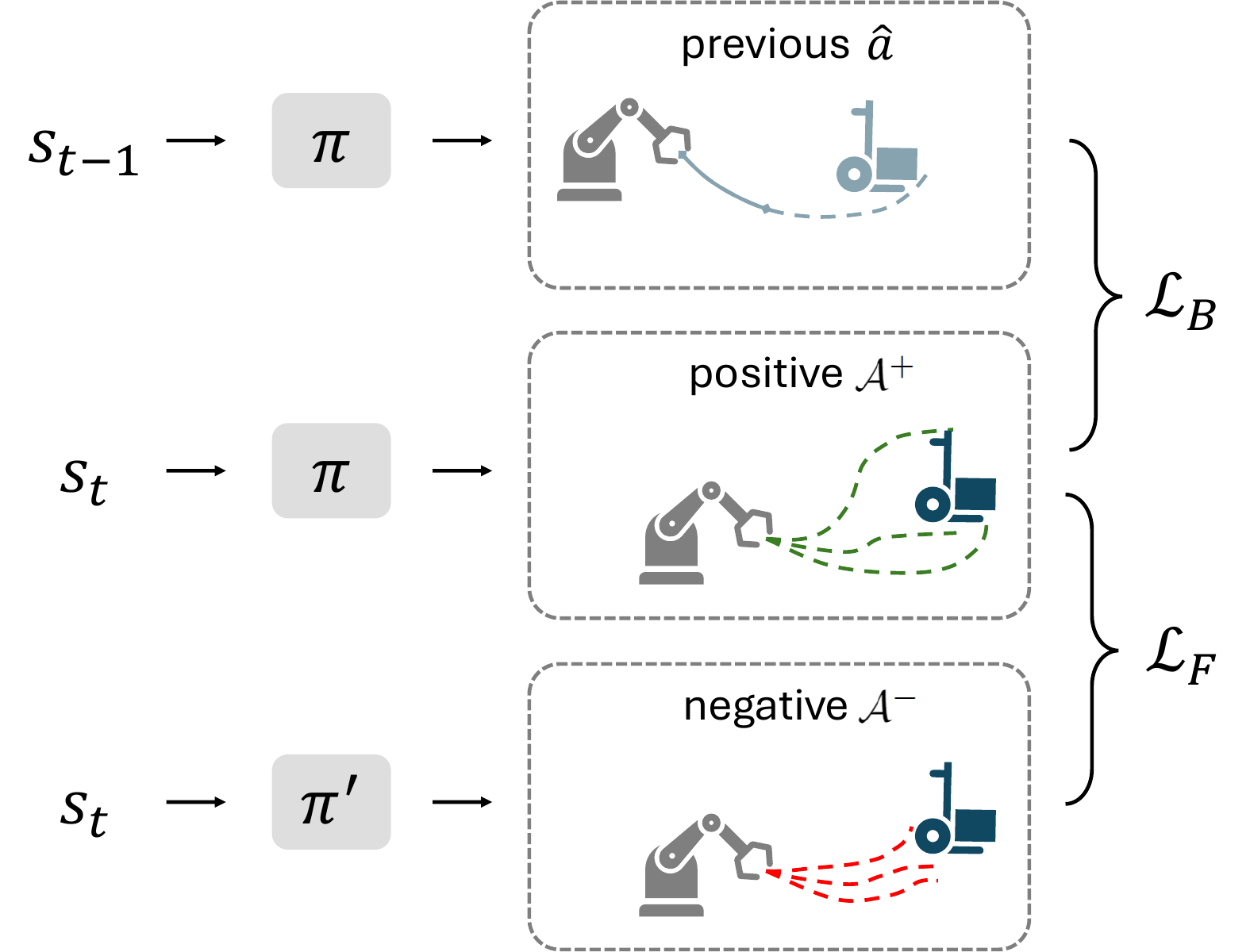}
    \captionof{figure}{Illustration of bidirectional decoding.
    }
    \label{fig:criteria}
  \end{minipage}
  \hfill
  \begin{minipage}[b]{0.525\textwidth}
    \centering
    \begin{algorithm}[H]
    \setstretch{1.2} 
    \small 
      \caption{{\methodname}}
      \label{algo:method}
      \begin{algorithmic}[1]
        \REQUIRE current state $s$, batch size $N$, mode size $K$, previous decision $\hat a$, strong policy $\pi$, weak policy $\pi'$
        \STATE Generate $N$ samples from each policy $a \sim \pi(s)$, $a' \sim \pi'(s)$ to construct the initial sets $\actionSpace$ and $\actionSpace^{'}$ 
        \STATE Compute the backward loss $\Loss_B$ for each sample
        \STATE Select K samples with minimal $\Loss_B$ from $\actionSpace$ and $\actionSpace^{'}$ to construct $\actionSpace^{+}$ and $\actionSpace^{-}$, respectively
        \STATE Compute the forward loss $\Loss_F$ for each sample
        \STATE Select $a^* \in \actionSpace$ that minimizes the total loss 
        \STATE Update decision memory $\hat a \leftarrow a^*$
      \end{algorithmic}
    \end{algorithm}
  \end{minipage}
\vspace{-2pt}
\end{figure}

\subsection{Test-Time Search} \label{subsec:search}

Recall that for a policy with prediction length $l$, an action chunk $a := \{ \action_{t}^{(t)}, \action_{t+1}^{(t)}, \cdots, \action_{t+l}^{(t)} \}$ sampled at time $t$ is expected to follow a consistent strategy over the subsequent $l$ time steps. 
However, executing the action chunk in an open-loop manner leaves the policy vulnerable to unexpected state changes.
Alternatively, one can maximize reactivity by resampling an action chunk and executing only the first immediate action at every time step. Yet, this simple closed-loop approach destroys the consistency preserved within each chunk, potentially leading to oscillations between different strategies.
How can we combine the benefits of both approaches?

Our key idea is to bridge action chunking and closed-loop resampling by making use of additional computes at test time.
In particular, we seek to restore temporal consistency in closed-loop operations by scaling up the number of candidate samples.
Intuitively, while the probability of any single pair of samples sharing the same latent strategy is low, the likelihood of finding a consistent pair increases with the number of samples.
We thus frame the problem of closed-loop action chunking as searching for the optimal action among a batch of samples drawn at each time step,
\begin{equation}
\action^* = \argmin_{\action \in \actionSpace} \Loss_B(a) + \Loss_F(a),
\end{equation}
where $\actionSpace$ is the set of sampled action chunks, $\Loss_B$ and $\Loss_F$ are two criteria approximating the optimality with respect to the backward decision and forward plan, which we will describe next.

\subsection{Bidirectional Criteria} \label{subsec:criteria}

\textbf{Backward coherence.}
To preserve temporal dependencies in closed-loop operations, a sequence of actions should (i) commit to a consistent latent strategy over time, and (ii) react smoothly to unexpected changes. 
These desired properties motivate us to keep the action chunk selected at the previous time  $\hat a := \{ {a}^{(t-1)}_{t-1}, \cdots, {a}^{(t-1)}_{t+l-1} \}$ as a prior, and minimize the weighted Euclidean distance between the new action chunk and the prior across $l-1$ overlapping steps:
\begin{equation} \label{eq:backward}
\Loss_B = \sum_{\tau=0}^{l-1} \rho^\tau \left\| a^{(t)}_{t+\tau} - {a}^{(t-1)}_{t+\tau} \right\|_2.
\end{equation}
Here, $\rho$ is a decay hyperparameter to account for growing uncertainty over time. \revision{By default, we use the L2 norm as the distance metric between predicted actions, while other metrics, such as L1 or cosine distance, can also be effective (see \cref{app:metric})}. This backward objective encourages similar latent strategies between consecutive steps while allowing for gradual adaptation to unforeseen transition dynamics.

Nevertheless, the backward criterion alone presents a potential caveat: the prior chunk could be suboptimal due to the lack of information at the previous time step (\eg, unexpected object motions). In such cases, selecting the next action chunk based solely on the prior may perpetuate suboptimality.
Ideally, the sequential decision-making process should effectively correct suboptimal plans based on the latest observations. We next address this need through another forward criterion.

\begin{figure}[t]
    \centering
    \vspace{-12pt}
    \includegraphics[width=0.7\linewidth]{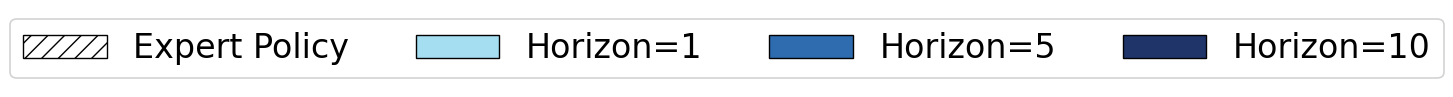}
    \begin{subfigure}[t]{0.3\linewidth}
        \centering
        \includegraphics[width=\linewidth]{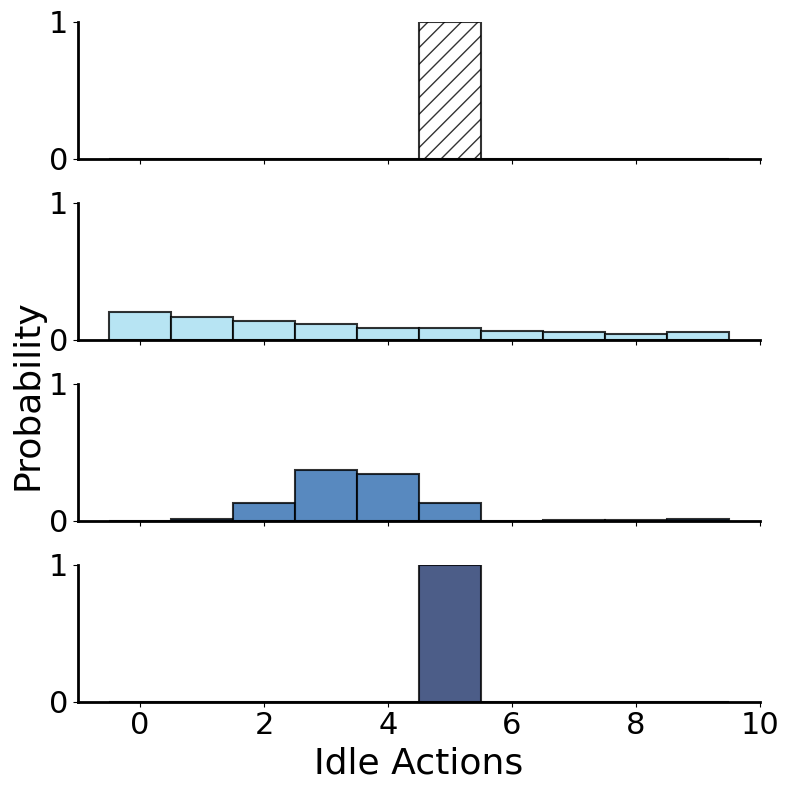}
        \caption*{\shortstack[l]{(a) long-idle demo, low-noise env}}
        \label{}
    \end{subfigure}
    \hfill
    \begin{subfigure}[t]{0.3\linewidth}
        \centering
        \includegraphics[width=\linewidth]{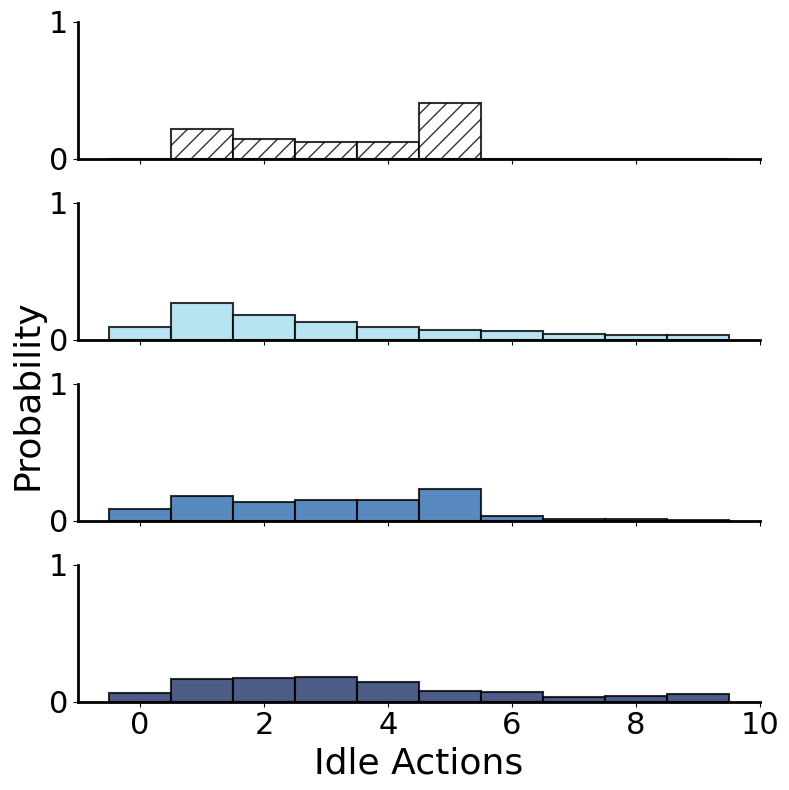}
        \caption*{\shortstack[l]{(b) mid-idle demo, mid-noise env}}
        \label{}
    \end{subfigure}
    \hfill
    \begin{subfigure}[t]{0.3\linewidth}
        \centering
        \includegraphics[width=\linewidth]{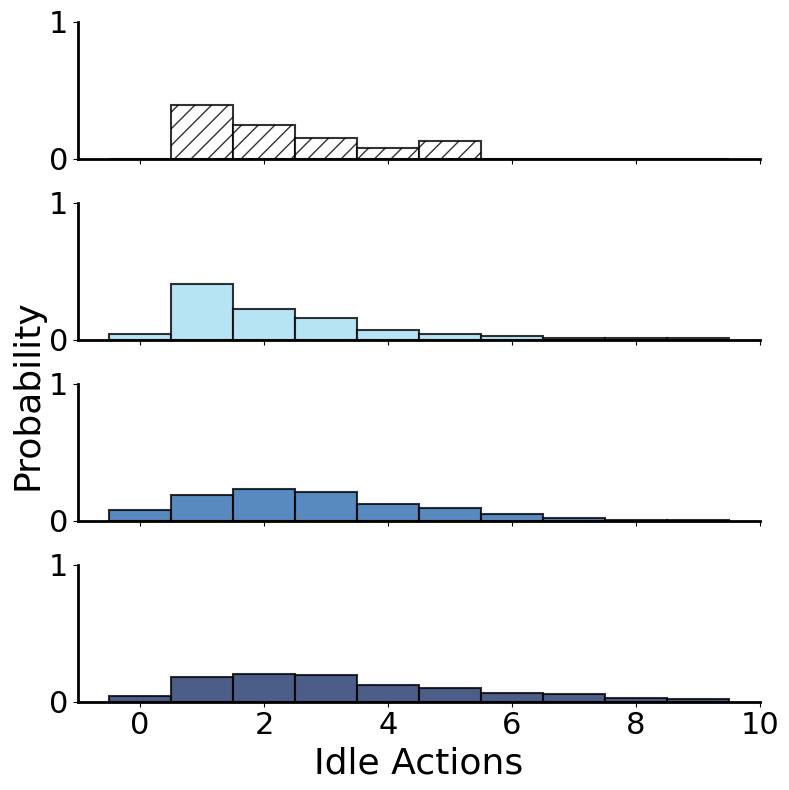}
        \caption*{\shortstack[l]{(c) short-idle demo, high-noise env}}
        \label{}
    \end{subfigure}
    \vspace{-1mm}
    \caption{
    Effect of action horizon $h$ on idle actions in 1-dimensional simulations. All policies share the same prediction length $l$.
    Long action horizons lead to idle distributions closer to the long-idle expert in low-noise environments, 
    whereas shorter action horizons align more closely with the short-idle expert in high-noise environments.
    When both idling and noise are non-negligible, a moderate action horizon performs the best.
    }
    \vspace{-6pt}
    \label{fig:AH_CL_simulation}
\end{figure}

\textbf{Forward contrast.} 
Our design of the forward criterion is motivated by the need to identify the most optimal plan from a set of candidates. Within the same latent strategy, suboptimal samples may arise from (i) low likelihood under the learned model and (ii) divergence between the learned model and expert policy.
To address this, we draw inspirations from LLM decoding techniques~\citep{wangSelfConsistencyImprovesChain2022,liContrastiveDecodingOpenended2023a} and introduce a forward contrast criterion.  
Specifically, we compare each candidate sample with two sets of reference samples: one positive set from a stronger policy and a negative set from a weaker one.
The stronger policy is obtained from a well-trained checkpoint, whereas the weaker policy is taken from an early underfitting checkpoint and is expected to be further from the expert policy. 
Our forward objective is thus framed as minimizing the average distance between a candidate plan and the positive samples while maximizing its average distance from the negative ones,
\begin{equation} \label{eq:forward}
\Loss_F = \frac{1}{N}\left(\sum_{a^{+} \in \actionSpace^{+}} \sum_{\tau=0}^{l} \left\| a^{(t)}_{t+\tau} - a^{+}_{t+\tau} \right\|_2 - \sum_{a^{-} \in \actionSpace^{-}} \sum_{\tau=0}^{l} \left\| a^{(t)}_{t+\tau} - a^{-}_{t+\tau} \right\|_2\right),
\end{equation}
where $\actionSpace^{+} = \actionSpace \setminus \{a\}$ is the positive set predicted by the strong policy $\policy$, $\actionSpace^{-}$ is the negative set predicted by the weaker one $\policy'$, and $N$ is the sample size.

\cref{fig:criteria} illustrates the combined effects of the backward coherence and forward contrast criteria on sample selection.
Since samples in $\actionSpace^{+}$ and $\actionSpace^{-}$ are not all subject to the same strategy, we trim each set by removing samples that deviate significantly from the previous decision. 
This is achieved by summing over the $K$ smallest distance values for in the positive and negative sets in~\cref{eq:forward}.
The full process of our decoding method is outlined in~\cref{algo:method}.
Since all steps in {\methodacro} can be computed in parallel, the overall computational overhead remains modest on modern GPU devices.
Please refer to~\cref{app:discuss} for additional discussions about our decoding method.

\section{Experiments}
\label{sec:experiment}



In this section, we present a series of experiments to answer the following questions:
\begin{enumerate}[nosep]
\item How does our theoretical analysis on action chunking manifest under different conditions?
\item Can our decoding method improve closed-loop operations of policies built with action chunking?
\item Does our decoding method perform well across different policies, tasks, and environments?   
\item Is our decoding method scalable to larger sample sizes and compatible with existing methods?
\end{enumerate}

To this end, we will first validate our theoretical analysis through one-dimensional diagnostic simulations. We will then evaluate {\methodacro} on seven tasks across three simulation benchmarks, including Push-T~\citep{chiDiffusionPolicyVisuomotor2023}, RoboMimic~\citep{mandlekarWhatMattersLearning2022}, and Franka Kitchen~\citep{gupta2020relay}.
We will subsequently examine the generality and scalability of our method under various base policies, sample sizes, and stochastic noise.
We will finally assess the effectiveness of {\methodacro} in two challenging real-world tasks that require interactions with dynamic objects.

\subsection{One-Dimensional Diagnostic Experiments} \label{subsec:toy}

\textbf{Setup.}
We start with a diagnostic experiment in a one-dimensional state space $\{s_0, s_1, \cdots, s_{10}\}$, where $s_0$ is the starting state and $s_{10}$ is the goal state. 
The demonstrator plans to move forward by one step in each state, except in $s_5$ where it pauses unless the last five states visited were $s_5$. Each forward move has a probability of $1 - \delta$, where $\delta$ denotes the level of stochastic noise in the environment (as described in~\cref{sec:theory}). Given these demonstrations, we train a collection of policies with different action horizons $h \in \{1, 2, 3, 5, 7, 10\}$. We investigate under what action horizon our learner can better imitate the distribution of idle actions taken by the expert over multiple rollouts. 

\textbf{Result.}
As shown in~\cref{fig:AH_CL_simulation}, when the environment is deterministic ($\delta = 0.0$), larger action horizons capture the expert distribution better, consistent with~\cref{PRIInequalityLowNoise}. With an action horizon of 10, the learner achieves zero total variation distance with the expert distribution. Conversely, when the environment is highly stochastic ($\delta = 0.4$), an action horizon of 1 outperforms all other learners, corroborating with~\cref{PRInqualityHighNoise}. With moderate noise ($\delta = 0.2$), \revision{there is no monotonic pattern and the optimal policy turns out to be the one with action horizon 5}. Refer to~\cref{app:1d} for more detailed results.
\revision{This controlled experiment validates the tradeoff identified in~\cref{PRInequality}, which we will further demonstrate in robotic manipulation subjected to stochastic noise in \cref{subsec:computes}}

\begin{figure}[t]
    \centering
    \includegraphics[width=0.875\linewidth]{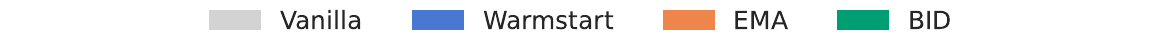}
    \vspace{-2mm}
    \begin{minipage}[t]{0.136\linewidth}
        \centering
        \includegraphics[width=1.02\linewidth]{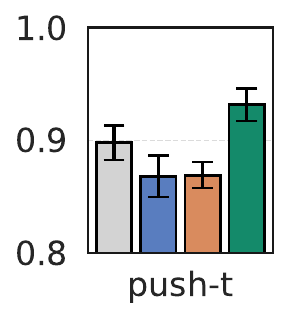}
    \end{minipage}
    \begin{minipage}[t]{0.136\linewidth}
        \centering
        \includegraphics[width=1.02\linewidth]{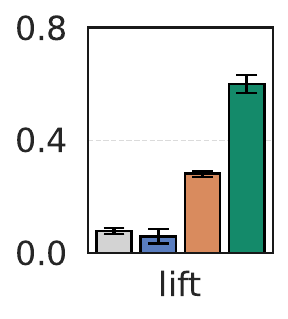}
    \end{minipage}
    \begin{minipage}[t]{0.136\linewidth}
        \centering
        \includegraphics[width=1.02\linewidth]{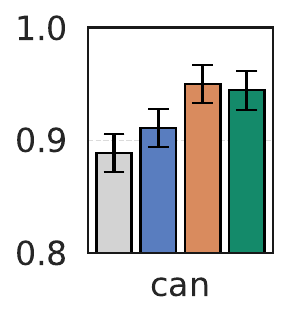}
    \end{minipage}
    \begin{minipage}[t]{0.136\linewidth}
        \centering
        \includegraphics[width=1.02\linewidth]{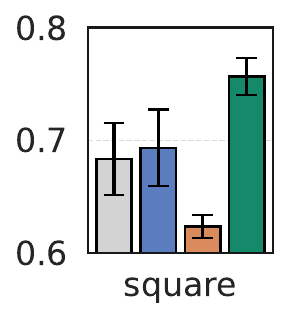}
    \end{minipage}
    \begin{minipage}[t]{0.136\linewidth}
        \centering
        \includegraphics[width=1.02\linewidth]{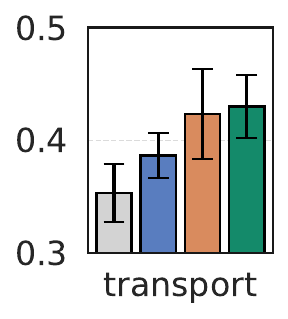}
    \end{minipage}
    \begin{minipage}[t]{0.136\linewidth}
        \centering
        \includegraphics[width=1.02\linewidth]{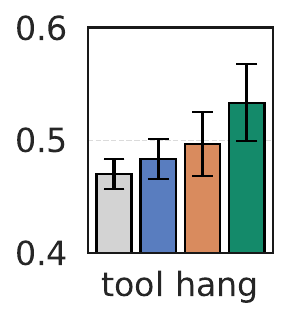}
    \end{minipage}
    \begin{minipage}[t]{0.136\linewidth}
        \centering
        \includegraphics[width=1.02\linewidth]{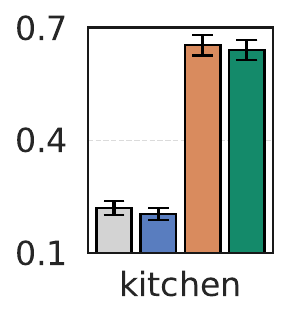}
    \end{minipage}

    \caption{Comparison of different inference methods for closed-loop operations of diffusion policies. Each method is evaluated for $100$ episodes on seven manipulation tasks in simulation benchmarks. \revision{Results are averaged across three seeds.} BID significantly outperforms existing inference methods in most tasks.}
    \label{fig:closedloop}
\end{figure}

\begin{figure}[t]
    \centering
    \begin{subfigure}[t]{0.41\linewidth}
        \centering
        \includegraphics[width=0.9\linewidth]{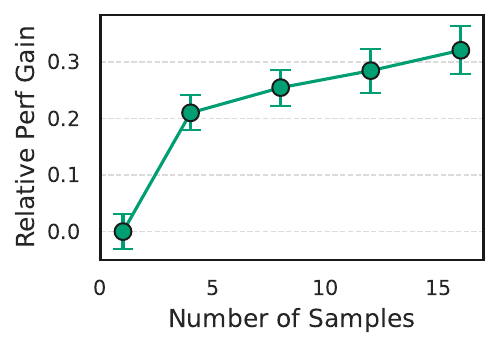}
        \label{fig:batchsize}
    \end{subfigure}
    \hfill
    \begin{subfigure}[t]{0.54\linewidth}
        \centering
        \includegraphics[width=0.9\linewidth]{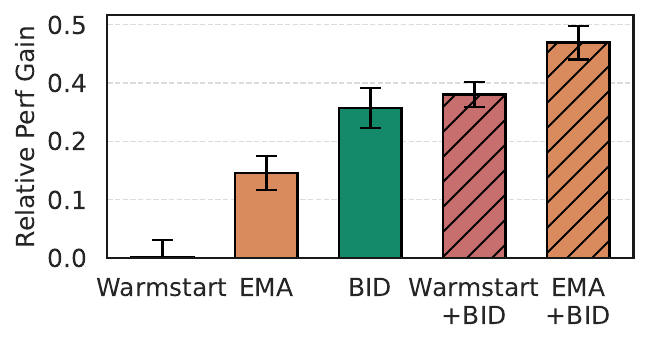}
        \label{fig:combine}
    \end{subfigure}
    \vspace{-8pt}
    \caption{
    {\methodacro} benefits from large sample sizes (left) and complements existing inference methods (right). \revision{Each method is evaluated on seven simulation tasks across three seeds.} Relative performance gain is measured with respect to the vanilla baseline. When combined with EMA, {\methodacro} results in 46\% relative improvements.}
    \label{fig:property}
\end{figure}

\subsection{Simulation Experiments with Stochastic Noise} \label{subsec:simulation}

Next, we evaluate our decoding algorithm on seven simulation tasks of robot manipulation. We will first compare BID with existing inference methods in closed-loop operations.
We will then assess the effectiveness of our method under different conditions, including policy classes, sample sizes, and levels of stochastic noise. 

\subsubsection{Comparison with Existing Inference Methods} \label{subsec:closedloop}

\textbf{Setup.}
In each manipulation task, we use Diffusion Policy~\citep{chiDiffusionPolicyVisuomotor2023} trained on human demonstrations as the base policy. 
We evaluate {\methodacro} with a batch size of $N=16$ and a mode size of $K=3$.
We consider three existing inference methods as baselines:
\begin{itemize}[nosep]
\item {\em Vanilla~\citep{chiDiffusionPolicyVisuomotor2023}}: Execute the first action of a sampled chunk in a closed-loop manner.
\item {\em Warmstart~\citep{jannerPlanningDiffusionFlexible2022}}: Similar to {\em Vanilla}, but warm-start the initial noise for the diffusion process from the previous decision.
\item {\em Exponential Moving Average (EMA)~\citep{zhaoLearningFineGrainedBimanual2023}}: Smooth action chunking by averaging a new prediction $a$ with the previous one $\hat a$ for each overlapping step $a_t = \lambda a_t + (1 - \lambda) {\hat a}_t$. This method is also known as temporal ensembling. By default, we set $\lambda = 0.5$.
\end{itemize}
We evaluate each method for $100$ episodes and average the results across three random seeds.
Please refer to~\cref{app:implement} for implementation details.

\textbf{Result.}
Our main observation is that while existing inference methods offer some benefits for closed-loop operations, they lack robustness. 
As shown in~\cref{fig:closedloop}, {\em Warmstart} yields mild performance gains on average, but degrades performance on 3 out of 7 tasks.
Similarly, EMA leads to competitive results on several tasks, yet exhibits performance drops in 2 tasks. 
We conjecture that this robustness issue stems from independent sampling across chunks; when successive chunks follow distinct latent strategies, averaging them may not yield a plausible strategy, as further discussed in~\cref{app:ema}.
In comparison, {\methodacro} offers substantial gains across all tasks. Notably, {\methodacro} provides $32\%$ relative improvements over the vanilla baseline, significantly outperforming EMA on Pust-T, Lift, Square, and Tool Hang, while achieving competitive performance on the other tasks.

\subsubsection{Scalability and Compatibility of {\methodacro}} \label{subsec:property}

\textbf{Setup.}
We further assess two key properties of {\methodacro}: scalability with growing batch sizes and compatibility with existing inference methods. For scalability, we experiment with batch sizes of $\{1, 4, 8, 12, 16\}$. For compatibility, we apply {\methodacro} on top of two other baselines, Warmstart and EMA. The results are averaged across seven simulation tasks and three random seeds.

\textbf{Result.}
As shown in~\cref{fig:property}, our method benefits from the large batch size, with performance gains not yet saturated at the default batch size used in~\cref{subsec:closedloop}.
Moreover, the strength from {\methodacro} is complementary to that of existing inference methods.
Notably, combining BID with EMA further boosts the relative performance gain from 32\% to 46\%.
These two properties highlight the potential of our method in practice.

\begin{table}[t]
\centering
\begin{minipage}[t]{0.475\textwidth}
\centering
\tiny 
\resizebox{!}{0.18\textwidth}{%
    \begin{tabular}{l| c c c}
    \toprule
    Stochastic Noise & {0.0} & {1.0} & {1.5} \\
    \midrule
    {Vanilla Open-Loop} & 64.0 & 26.9 & 13.0 \\ 
    {{\methodacro} Open-Loop} & \textbf{66.1} & 31.4 & 16.0 \\
    {Vanilla Closed-Loop} & 48.9 & 38.3 & 29.5 \\ 
    {{\methodacro} Closed-Loop} & {54.4} & \textbf{45.3} & \textbf{31.7} \\ 
    \bottomrule
    \end{tabular}
}
\vspace{4pt}
\caption{Success rates of VQ-BeT on the Push-T task under various noise conditions.  Closed-loop {\methodacro} substantially outperforms other methods in stochastic environments. \revision{See \cref{tab:VQBeTExtendedResults} for detailed ablations.}}
\label{tab:VQBeTclosedloop}
\end{minipage}
\hfill
\begin{minipage}[t]{0.475\textwidth}
\centering
\tiny 
\resizebox{!}{0.18\textwidth}{%
    \begin{tabular}{l| c c}
    \toprule
    {Sample Size} & {Success (\%)} & {{Time (ms)}} \\
    \midrule
    { 1 \quad (vanilla)} & 49.1 & 13.2 \\ 
    { 8 \quad (ours)} & 52.9 & 25.2 \\ 
    {16 \quad (ours)} & 54.2 & 25.9 \\ 
    {32 \quad (ours)} & 54.4 & 26.8 \\ 
    \bottomrule
    \end{tabular}
}
\vspace{4pt}
\caption{Success rates and inference times of VQ-BeT across varying sample sizes. {\methodacro} benefits from a larger sample size at the cost of a doubled computational overhead, measured on an A5000 GPU.}
\label{table:success_rate_inference_time}
\end{minipage}
\end{table}

\subsubsection{Generality and Efficiency of {\methodacro}} \label{subsec:computes}

\textbf{Setup.} 
We next extend our experiment to VQ-BET~\citep{leeBehaviorGenerationLatent2024}, another state-of-the-art robot policy built with autoregressive transformers. We use the public checkpoint on the Push-T task provided by LeRobot~\citep{cadene2024lerobot} as the base policy. We use a checkpoint terminated at 100 epochs as the weak policy in forward contrast. 
To simulate stochastic conditions, we add temporally correlated Gaussian noise to the executed action at each step, scaled by the action magnitude.
We measure the computational time on a desktop equipped with an NVIDIA A5000 GPU.

\textbf{Result.}
\cref{tab:VQBeTclosedloop} summarizes the results of the baseline and our method. 
The vanilla random sampling performs significantly worse than {\methodacro} in both closed and open-loop operations. Notably, the vanilla open-loop approach exhibits a rapid performance decline as the environment becomes increasingly stochastic. Even in closed-loop operations, the vanilla baseline still experiences a significant performance drop. In comparison, the closed-loop {\methodacro} demonstrates much higher robustness to stochastic noise. 
\cref{table:success_rate_inference_time} details the computational overhead associated with {\methodacro} at varying batch sizes. The result shows that the performance gains of our method come with a doubled computational overhead. We expect that this overhead will be less of a constraint with higher-end GPUs.

\subsection{Real-world Experiments with Dynamic Objects}

We finally evaluate {\methodacro} through two real-world experiments that require rapid reactions to dynamically moving objects.

\textbf{Setup.}
We consider two pick-and-place tasks, where the target object undergoes unexpected movement during evaluation. In the dynamic placing task, a Franka Panda robot is required to deliver an object into a moving cup held by a human subject. In the dynamic picking task, a UR5 robot is required to grasp a moving cup pulled by a string and place it onto a nearby static saucer. In both tasks, we evaluate the performance of {\methodacro} applied to pre-trained diffusion policies. For further experimental details, please refer to~\cref{app:realworld}.

\textbf{Result.}
\cref{fig:realworld,fig:cup_rearrange} compares the results of different inference methods on the two real-world tasks.
Vanilla random sampling struggles to handle the diverse demonstrations and dynamic movements, resulting in significantly lower success rates. In contrast, {\methodacro} achieves high success rates in both static and dynamic conditions. Notably, in the dynamic picking task, {\methodacro} achieves a 2x higher success rate than all other baselines, highlighting its potential for dynamic object interactions.

\begin{figure}[t]
\centering
\vspace{-8pt}
\begin{minipage}[t]{0.45\linewidth} 
    \centering
    \includegraphics[width=\linewidth]{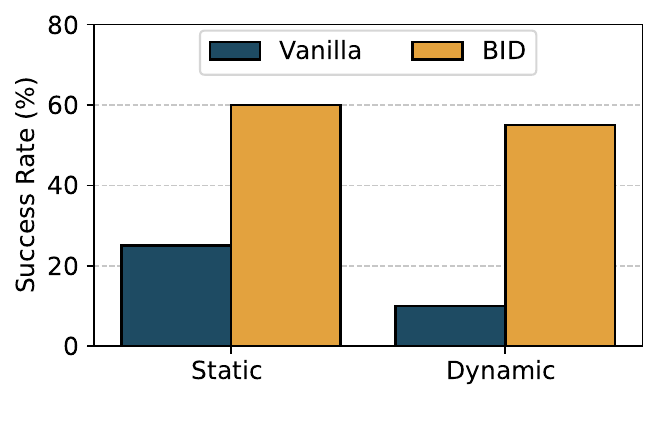}
    \caption{Success rate of object delivery. Each method-setting is evaluated across 20 episodes. {\methodacro} achieves much higher success rate than the vanilla baseline, effectively handling the diverse demonstrations and dynamic target.}
    \label{fig:realworld}
\end{minipage}
\hspace{0.05\linewidth} 
\begin{minipage}[t]{0.45\linewidth} 
    \centering
    \includegraphics[width=\linewidth]{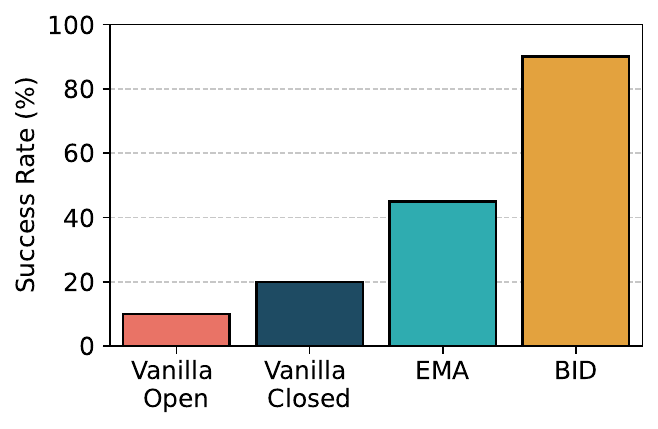}
    \caption{Success rate of cup replacement in the dynamic setting. Each method is evaluated across 20 episodes. Existing methods degrade substantially under slow cup movements, whereas {\methodacro} retains a strong performance.}
    \label{fig:cup_rearrange}
\end{minipage}
\end{figure}

\paragraph{Other experiments.}
Please refer to~\cref{app:experiment} for additional analyses and ablations.

\section{Conclusion}
\label{sec:conclusion}
{\bf Summary.}
We have analyzed the strengths and limitations of action chunking for robot learning from human demonstrations. Based on our analysis, we proposed {\methodname} ({\methodacro}), an inference algorithm that takes into account both past decisions and future plans for sample selection.
Our experimental results show that {\methodacro} can consistently improve closed-loop operations, scale well with computational resources, and complement existing methods.
We hope these findings provide a new perspective on addressing the challenges of generative behavioral cloning at test time.

{\bf Limitations.}
One major limitation of {\methodacro} lies in its computational complexity.
While the decoding computation can be fully parallelized on modern GPUs, it may remain expensive for high-frequency operations on low-cost robots.
Designing algorithms that can generate quality yet diverse action chunks under batch size constraints can be an interesting avenue for future research.
Additionally, our analysis and method have been limited to policies with short context lengths, driven by their empirical effectiveness with limited human demonstrations. 
Developing techniques capable of learning robust long-context policies can be another compelling direction for future research.

\section*{Reproducibility Statement}
Our code for the experiments with Diffusion Policy is available at \url{https://github.com/YuejiangLIU/bid_diffusion} and our code for the experiments with VQ-BET is available at \url{https://github.com/Jubayer-Hamid/bid_lerobot}.
Additionally, detailed descriptions of our experimental setups are available in~\cref{app:implement}, and complete proofs of our theoretical claims can be found in~\cref{app:proof}.
  
\ificlrfinal
  \section*{Acknowledgments}
  We would like to thank Anikait Singh, Eric Mitchell, Kyle Hsu, Moo Jin Kim, Annie Chen, Sasha Khazatsky, Rhea Malhotra and other members of the IRIS lab for valuable comments on this project.
We would like to thank Yifan Hou, Huy Ha, Chuer Pan, Austin Patel, and the REAL lab for helpful support in setting up real-world experiments. We would like to thank Asher Spector and Danny Tse for their insightful feedback on the theoretical analysis. This work is supported by the AI Institute, ONR grant N00014-21-1-2685, an NSF CAREER award, and an SNSF Postdoc fellowship. 

\fi



\bibliography{bibtex/robotics,bibtex/decoding,bibtex/social,bibtex/causal,bibtex/extra}
\bibliographystyle{iclr2025_conference}

\clearpage

\appendix
\begin{figure}[t]
    \centering
    \includegraphics[width=0.8\linewidth]{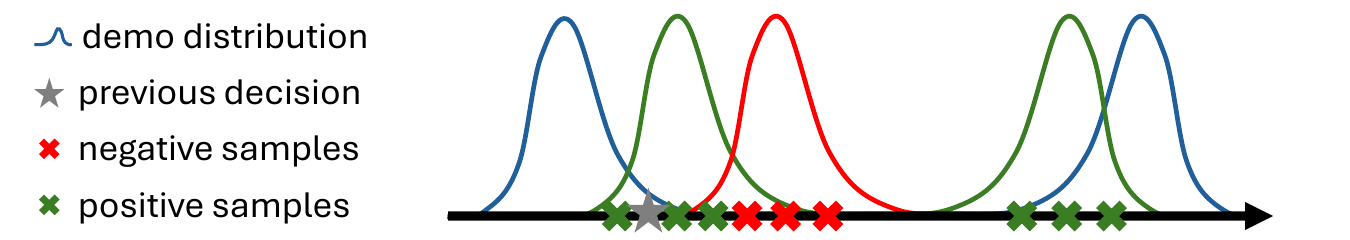}
    \caption{Distributional interpretation of {\methodacro}. The backward criterion (\autoref{eq:backward}) favors samples close to the past decision; the forward criterion (\autoref{eq:forward}) promotes samples with a high likelihood under the target distribution.
    }
    \label{fig:interpretation}
\end{figure}

\section{Additional Experiments} \label{app:experiment}

\subsection{One-dimensional Simulations}
\label{app:1d}

In addition to~\cref{fig:AH_CL_simulation}, we summarize the total variation distance between each learned policy and the demonstration in the one-dimensional simulation. Our results indicate that a shorter action horizon is more effective in noisier environments, whereas a longer action horizon yields better performance in static environments.

\begin{table}[t]
\centering
\small
\begin{tabular}{cccccc}
\toprule
& \multicolumn{5}{c}{Action Horizon} \\ \cmidrule{2-6}
Noise Level & 1 & 3 & 5 & 7 & 10 \\ 
\midrule
0.0 & 4.03 & 2.07 & 1.54 & 1.06 & \bf 0.00 \\ 
0.2 & 1.43 & 0.94 & \bf 0.39 & 0.71 & 1.32 \\ 
0.4 & \bf 0.36 & 0.42 & 0.59 & 0.835 & 1.11 \\ 
\bottomrule
\end{tabular}
\caption{Total variation distance between the action distributions of each model and the expert in environments with varying noise levels. Lower values indicate better performance.}
\end{table}

\begin{figure}[t]
    \centering
    \includegraphics[width=0.5\linewidth]{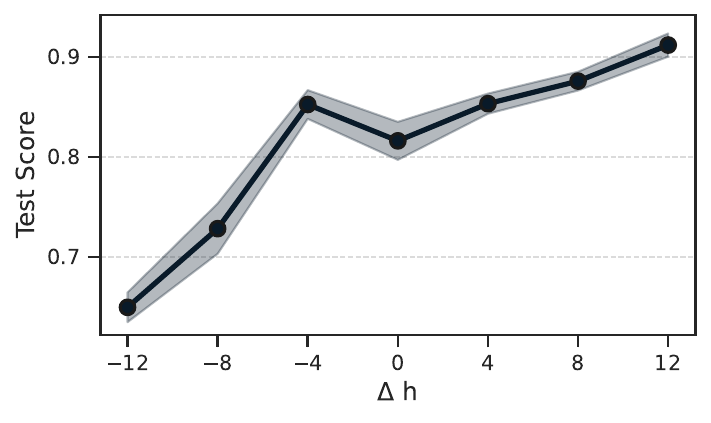}
    \caption{Effect of action horizon ($h$) and context length ($c$) on diffusion policies in the Push-T task. The baseline is set at $h=2$ and $c=2$, with $\Delta h = h - c = 0$. Extending the action horizon ($h > 2$) consistently improves performance, whereas extending the context length ($c > 2$) can cause substantial performance declines. Each model is trained for $5k$ epochs. Results are averaged over the last five checkpoints.}
    \label{fig:context}
\end{figure}

\subsection{Action Horizon vs. Context Length} \label{app:horizon}

\textbf{Setup.}
Our work builds on the premise that the action horizon is longer than the context length, as commonly designed for recent policies.
While {\methodacro} mitigates the inherent limitations of this design choice through test-time decoding, an important question remains: could extending the context length yield stronger policies?
To understand this, we trained diffusion policies with varying combinations of action horizons and context lengths on the Push-T task.
Specifically, we use a short context length ($c=2$) and a short action horizon ($h=2$) as our baseline, set the prediction length equal to the action horizon $l=h$, and incrementally increase these parameters to larger values ${6, 10, 14}$ to assess their impact.

\textbf{Result.}
\cref{fig:context} compares the performance of the policy learned with different $\Delta h = h - c$.
As expected, the policy with both a short action horizon and a short context length does not perform well, due to its limited capability to model long-range temporal dependencies. 
Interestingly, extending the context length initially boosts performance ($\Delta h = -4$), but this trend reverses as the context length becomes too long ($\Delta h \le -8$), likely due to overfitting to an increased number of spurious features.
In contrast, expanding the action horizon results in more robust performance improvements, validating its pivotal role in imitation learning from human demonstrations.

\subsection{Ablation Study of Forward Contrast}
\label{app:contrast}

\paragraph{Setup.} To understand the effect of forward contrast (\autoref{eq:forward}), we evaluate the full version of our method against three reduced variants in open-loop operations:
{\em Vanilla} (without forward contrast), {\em Positive} (without negative samples), and {\em Negative} (without positive samples). Similar to~\cref{subsec:property}, our ablation study is conducted in seven simulation tasks, each with three random seeds. 

\paragraph{Result.} \cref{fig:contrast} summarizes the result of this ablation study. Notably, both positive and negative samples are essential for effective sample selection, and omitting either leads to significant performance declines. Without negative samples, our decoding method reduces to an approximate maximum a posteriori estimation, which can result in suboptimal decisions due to modeling errors. Conversely, without positive samples, the sampling process may be biased towards rare instances.
This result highlights the importance of both components and suggests the potential for extending this paradigm in future work.


\begin{figure}[t]
\centering
\vspace{-10pt}
    \begin{minipage}[t]{0.475\linewidth}
        \centering
        \includegraphics[width=1.0\linewidth]{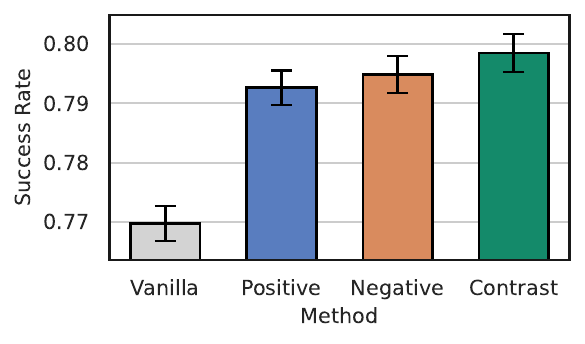}
        \caption{Effect of positive and negative samples on forward contrast. Performance of ablated variants of forward contrast is evaluated across seven simulation tasks. The absence of either positive or negative samples prevents achieving the full performance gains observed with the contrast objective.}
        \label{fig:contrast}
    \end{minipage}
    \hfill
    \begin{minipage}[t]{0.475\linewidth}
        \centering
        \includegraphics[width=1.0\linewidth]{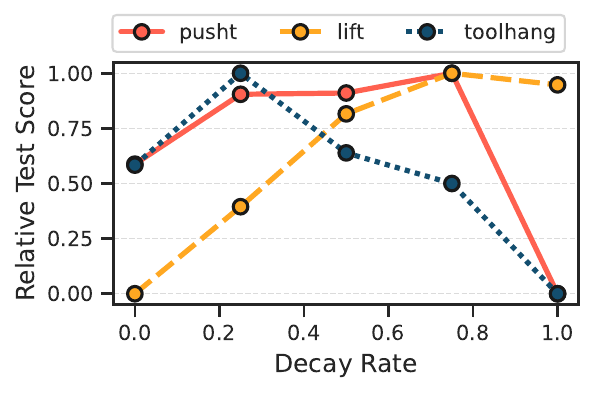}
        \caption{Effect of the decay rate for the exponential moving average. In each task, we measure the relative performance among different decay rates. The optimal decay rate varies by task, leading to a practical challenge of identifying a universal temporal ensembling strategy~\citep{zhaoLearningFineGrainedBimanual2023}.
        }
        \label{fig:ema}
    \end{minipage}
\end{figure}

\subsection{Challenges for Temporal Ensebmling}
\label{app:ema}

EMA exhibits competitive performance in~\cref{fig:closedloop}.
However, tuning its decay rate can be difficult in practice.
\cref{fig:ema} shows the sensitivity of EMA to the decay rate across three different tasks, where the optimal choices differ significantly.
We conjecture that this high sensitivity stems from the variability in the latent strategies between consecutive predictions. 
When consecutive predictions follow similar strategies, a lower decay rate (\ie, stronger moving average) can enhance smoothness and improve performance. Conversely, when consecutive predictions diverge in their underlying strategies, averaging them can introduce adverse effects.
Our method promotes coherence in latent strategies and thus effectively complements temporal ensembling, as evidenced in~\cref{fig:property}.

\subsection{Ablation Studies on VQ-BeT}
\revision{\cref{tab:VQBeTExtendedResults} presents the performance of various sampling methods with VQ-BeT~\citep{leeBehaviorGenerationLatent2024} on the Push-T task under different levels of environment stochasticity. We compare BID against vanilla sampling and EMA, for both open-loop and closed-loop executions. BID consistently outperforms the baselines, particularly in highly stochastic settings, with close-loop BID attaining the best performance on average. Moreover, the performance of EMA degrades significantly as the environment gets more noisy, likely due to the divergence in strategies between consecutive action chunks. Additionally, we conduct an ablation study on the forward contrast objective, isolating the roles of positive samples (from a strong model) and negative samples (from a weaker model). We observe that both types of samples contribute to the strong performance of BID across varying levels of environment stochasticity.}

\begin{table}[t]
\centering
\scriptsize 
\setlength{\tabcolsep}{3pt} 
\renewcommand{\arraystretch}{1.2} 

\begin{subtable}{0.46\textwidth}
    \centering
    \begin{tabular}{l|ccc|c}
    \toprule
    Stochastic Noise & 0.0 & 1.0 & 1.5 & Average \\
    \midrule
    Vanilla   & 64.0 $\pm$ 4.2 & 26.9 $\pm$ 2.8 & 13.0 $\pm$ 0.4 & 34.6 $\pm$ 1.7 \\
    EMA       & 64.1 $\pm$ 1.7 & 27.6 $\pm$ 3.3 & 12.9 $\pm$ 1.1 &  34.9 $\pm$ 1.3\\
    Backward (ours) & 64.0 $\pm$ 1.3 & 27.6 $\pm$ 1.0 & 13.4 $\pm$ 2.4 & 35.1 $\pm$ 1.0 \\
    + Positive (ours)  & 65.6 $ \pm $ 1.9 & 29.7 $\pm$ 0.4 & 15.7 $\pm$ 1.7  & 37.0 $\pm$ 0.9 \\
    + Negative (ours)  & 65.1 $ \pm $ 2.7 & 26.6 $\pm$ 0.8 & 14.8 $\pm$ 2.4 & 35.5 $\pm$ 1.2 \\
    BID (ours)       & \textbf{66.1} $ \pm $ \textbf{3.5} & \textbf{31.4} $\pm$ \textbf{3.0} & \textbf{16.0} $\pm$ \textbf{1.2} & \textbf{37.8} $\pm$ \textbf{1.6} \\
    \bottomrule
    \end{tabular}
    \caption{Open-Loop Operation}
    \label{tab:open_loop}
\end{subtable}%
\hfill 
\begin{subtable}{0.46\textwidth}
    \centering
    \begin{tabular}{l|ccc|c}
    \toprule
    Stochastic Noise & 0.0 & 1.0 & 1.5 & Average \\
    \midrule
    Vanilla   & 48.9 $ \pm $ 2.7 & 38.3 $\pm$ 3.4 & 29.5 $\pm$ 0.9 & 38.9 $\pm$ 1.5 \\
    EMA       & 52.6 $\pm$ 2.9 & 35.7 $\pm$ 2.2 & 18.4 $\pm$ 2.3 & 35.6 $\pm$ 1.4 \\
    Backward (ours) & 52.7 $ \pm $ 1.3 & 42.0 $\pm$ 3.2 & 29.4 $\pm$ 0.5 & 41.4 $\pm$ 1.2 \\
    + Positive (ours)  & 53.8 $ \pm $ 3.3 & 44.1 $\pm$ 2.5 & 30.4 $\pm$ 0.8 &  42.8 $\pm$ 1.4\\
    + Negative (ours)  & 53.0 $\pm$ 2.1 & 44.3 $\pm$ 2.2 & 30.8 $\pm$ 0.4 & 42.7 $\pm$ 1.0 \\
    BID (ours) & \textbf{54.4 $\pm$ 1.8} & \textbf{45.3 $\pm$ 3.8} & \textbf{31.7 $\pm$ 0.3} &  \textbf{43.8 $\pm$ 1.4} \\
    \bottomrule
    \end{tabular}
    \caption{Closed-Loop Operation}
    \label{tab:closed_loop}
\end{subtable}

\caption{Success rates of VQ-BeT on the Push-T task under various conditions and sampling methods. The left table shows open-loop success rates, while the right table shows closed-loop success rates. BID consistently outperforms vanilla counterparts in both settings.}
\label{tab:VQBeTExtendedResults}
\end{table}

\revision{
\subsection{Choice of Distance Metrics}
\label{app:metric}

As described in~\cref{subsec:criteria}, our method uses the L2 distance as the default distance measure for quantifying similarity between action chunks. Yet, L2 distance is just one of several viable options. To assess the sensitivity of our approach to the choice of distance metric, we extend the experiments in~\cref{subsec:closedloop} to further evaluate BID with alternative choices, including L1 and cosine distance. Results from three robot manipulation tasks, summarized in~\cref{tab:metrics}, demonstrate that BID achieves substantial performance gains regardless of the metric used. In particular, BID with L1 distance surpasses the default L2 distance in performance, highlighting the untapped potential of our method.  
}

\begin{table}[t]
\centering
\small
\begin{tabular}{ccccc}
\toprule
\multicolumn{1}{l}{} & Vanilla         & L2 (Ours)            & L1 (Ours)            & Cosine (Ours)        \\
\midrule
Square               & 0.68 $\pm$ 0.06 & \bf 0.76 $\pm$ 0.03  & \bf 0.76 $\pm$ 0.04  & 0.73 $\pm$ 0.04     \\
Lift                 & 0.12 $\pm$ 0.02 & 0.58 $\pm$ 0.06      & \bf 0.68 $\pm$ 0.05  & \bf 0.70 $\pm$ 0.02 \\
Kitchen              & 0.22 $\pm$ 0.04 & 0.64 $\pm$ 0.05      & \bf 0.70 $\pm$ 0.04  & 0.61 $\pm$ 0.03    \\
\bottomrule
\end{tabular}
\caption{Effect of distance metric. We evaluate BID with different distance metrics on three robot manipulation tasks. Regardless of the chosen metric, BID substantially outperforms the vanilla baseline. Notably, the L1 distance yields an even stronger performance than the default L2 distance used in our other experiments.}
\label{tab:metrics}
\end{table}

\section{Additional Discussions} \label{app:discuss}

\textbf{Interpretation of our method.}
Our method makes no changes to the learned policy; instead, it intervenes in the model distribution through sample selection. 
As illustrated in~\cref{fig:interpretation}, randomly sampled sequences may be misaligned with both the previous decisions and the target demonstrations.
Given a set of candidates, the backward step first identifies the behavioral mode from the past decision stored in memory; the forward step then removes the samples with low likelihood under the target distribution using prior knowledge of positive and negative samples. 
By comparing samples across time steps and model checkpoints, our method bridges the gap between the proposal and target distributions at test time.

\textbf{Relation to recent methods.}
Our method builds upon the receding horizon~\citep{chiDiffusionPolicyVisuomotor2023} and temporal ensembling~\citep{zhaoLearningFineGrainedBimanual2023} used in previous works, but with crucial distinctions.
Receding horizon seeks a compromise between long-term consistency and short-term reactivity by using a moderate action horizon (\eg, half of the prediction length), which is inevitably sup-optimal when both factors are prominent.
Temporal ensembling strengthens dependency across chunks by averaging multiple decisions over time; however, weighted-averaging operations can be detrimental when consecutive decisions fall into distinct modes.
Our method more effectively addresses cross-chunk dependency through dedicated behavioral search and is not mutually exclusive to previous methods.
We will demonstrate in the next section that combining our method with the moving average can further improve closed-loop action chunking.

\section{Additional Details} \label{app:implement}

\subsection{Simulation Experiment Details}  \label{app:simulation}

\subsubsection{Environment Details}

Our simulation experiments are conducted on three robot manipulation benchmarks. We use the training data collected from human demonstrations in each benchmark. 

{\em Push-T}: We adopt the Push-T environment introduced in~\citep{chiDiffusionPolicyVisuomotor2023}, where the goal is to push a T-shaped block on a table to a target position. The action space is two-dimensional end-effector velocity control. The training dataset contains 206 demonstrations collected by humans.

{\em Robomimic}: We use five tasks in the Robomimic suite~\citep{mandlekarWhatMattersLearning2022}, namely Lift, Can, Square, Transport, and Tool Hang. The training dataset for each task contains 300 episodes collected from multi-human (MH) demonstrations. 

{\em Franka Kitchen}: We use the Franka Kitchen environment from~\citep{gupta2020relay}, featuring a Franka Panda arm with a seven-dimensional action space and 566 human-collected demonstrations. The learned policy is evaluated on test cases involving four or more objects (p4), a challenging yet practical task for robotic manipulation in household contexts.

\subsubsection{Implementation Details.}
Our implementation of {\methodacro} for Diffusion Policy is built upon the official code of~\citet{chiDiffusionPolicyVisuomotor2023}, with modifications made solely to the inference process. The policy takes in state inputs and predicts a chunk of 16 actions as outputs. For each simulation task, we train the model for 100-1000 epochs to reach near-optimal performance. We evaluate it in closed-loop operations, \ie, action horizon is set to 1.
For forward contrast, we train the weak policy for 10-100 epochs, resulting in a suboptimal policy for each task. The core hyperparameters are summarized in~\cref{tab:hyper}.

Our implementation of {\methodacro} for VQ-BeT~\citep{leeBehaviorGenerationLatent2024} is built upon the code of LeRobot~\citep{cadene2024lerobot}.
We use the best public checkpoint as the strong policy and a checkpoint trained for 100k iterations as the weak policy. \revision{Since {\methodacro} requires sample diversity, we set the temperature to be 0.5 for all methods in our experiments.}

\begin{table}[t]
    \centering
    \begin{tabular}{wl{5.0cm}|wl{1.0cm}}
        name & value \\ 
    \specialrule{.1em}{.05em}{.05em}  
        batch size $N$ & 16 \\
        mode size $K$ & 3 \\
        prediction length $l$ & 16 \\
        temporal coherence decay $\rho$ & 0.5 \\
        moving average decay $\lambda$ & 0.5 \\
    \end{tabular}
    \vspace{6pt}
    \caption{
    Default hyper-parameters in our experiments.
    }
    \label{tab:hyper}
\end{table}

\subsection{Real-world Experiment Details}  \label{app:realworld}

\subsubsection{Dynamic Placing}

\paragraph{Task.}
We consider a task where the robot is to deliver an object held in its gripper into a cup held by a human.
As shown in~\cref{fig:demo}, this task comprises four main stages and presents two core challenges. First, due to the similar size of the object and the cup, the robot must achieve high precision to place the object accurately into the cup. Second, the position of the cup is not fixed, requiring the robot to adjust its plans based on the latest position continuously. This task mirrors real-world scenarios where robots interact with a dynamic environment, accommodating moving objects and agents.

\paragraph{Demonstration.}
In light of temporal dependencies and style variations in human behaviors, we intentionally collect a diverse set of demonstration data, differing in factors such as average speed, idling pause, and overall trajectory. We gather a total of 150 demonstration episodes: 50 clean and consistent demonstrations, and 100 noisy and diverse demonstrations. All demonstrations successfully accomplish the task. Additional, the location of the cup is fixed and static within each episode.

\paragraph{Robot.}
Following previous works~\citep{chiDiffusionPolicyVisuomotor2023,prasadConsistencyPolicyAccelerated2024}, we use a Franka Panda as the robot hardware and the vision-based diffusion policy for its operation.
The robot is equipped with two cameras: one ego-centric camera mounted at the wrist of the robot, one third-person camera mounted at a static bracket. 
Both cameras provide visual observations at a resolution of $256 \times 256$ pixels. The robot operates at a frequency of 10 Hz, with a prediction length of 16 time steps.

\paragraph{Evaluation.}
We evaluate our method in comparison to vanilla random sampling under two conditions: {\em static target}, where the target cup remains fixed throughout the evaluation, and {\em dynamic target}, where the target cup is gradually moved.
In the dynamic setting, the location of the cup stays within the range of training locations, but the movement is not encountered during training. This evaluation protocol is designed to explicitly assess the ability of the policy to react to unexpected dynamics in the environment.
Each method-setting pair is tested over 20 episodes, with both the initial and target locations randomized across different episodes.

\paragraph{Result.}
We summarize the result of the real-world experiments in~\cref{fig:realworld}.
The success rate of vanilla random sampling is generally limited due to oscillations between different latent strategies, which quickly diverge from the distribution of demonstrations. This issue is particularly pronounced in the dynamic setting, where the vanilla baseline struggles to account for the target movements within an action chunk lasting for 1.6 seconds. In contrast, the proposed {\methodacro} method significantly improves performance in both static and dynamic settings. Notably, {\methodacro} maintains a similar success rate in the dynamic setting as in the static setting, suggesting its potential to extend action chunking into uncertain environments.

\begin{figure}[t]
    \centering
    \begin{subfigure}[b]{0.18\linewidth}
        \includegraphics[width=\linewidth]{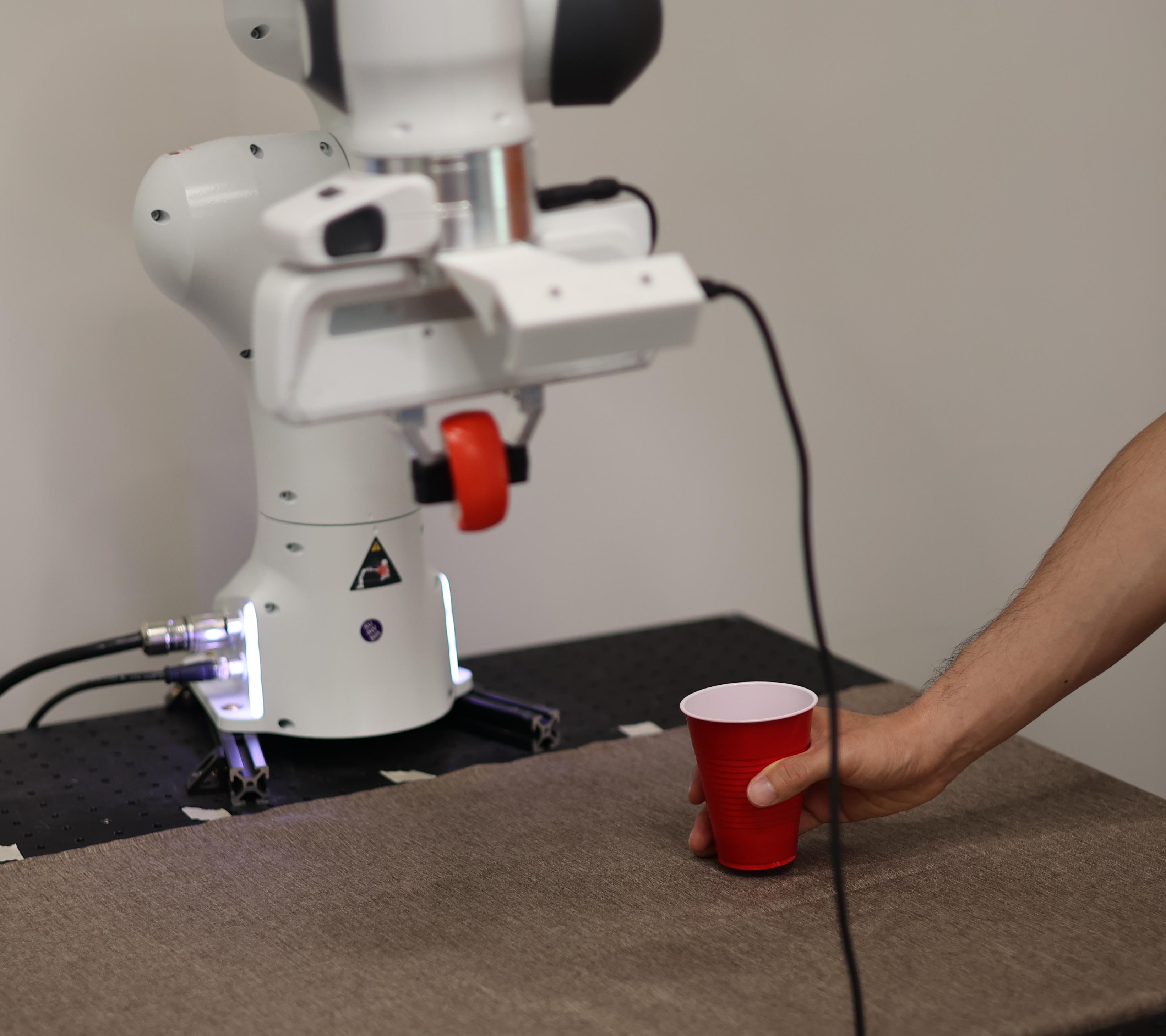}
        \subcaption{initialize}
    \end{subfigure}
    \hspace{0.1em} \raisebox{10.5ex}{$\rightarrow$} \hspace{0.1em}
    \begin{subfigure}[b]{0.18\linewidth}
        \includegraphics[width=\linewidth]{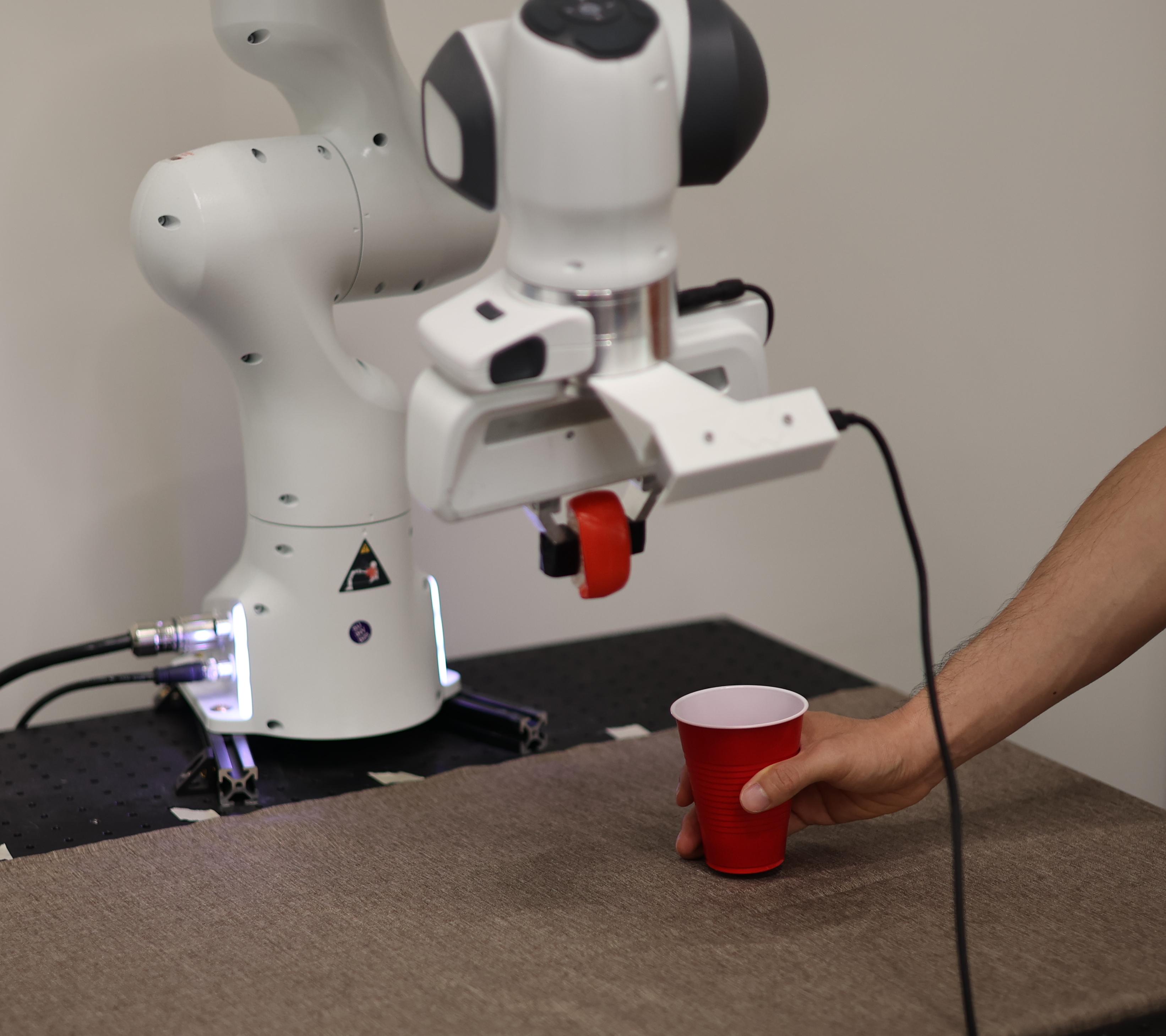}
        \subcaption{approach}
    \end{subfigure}
    \hspace{0.1em} \raisebox{10.5ex}{$\rightarrow$} \hspace{0.1em}
    \begin{subfigure}[b]{0.18\linewidth}
        \includegraphics[width=\linewidth]{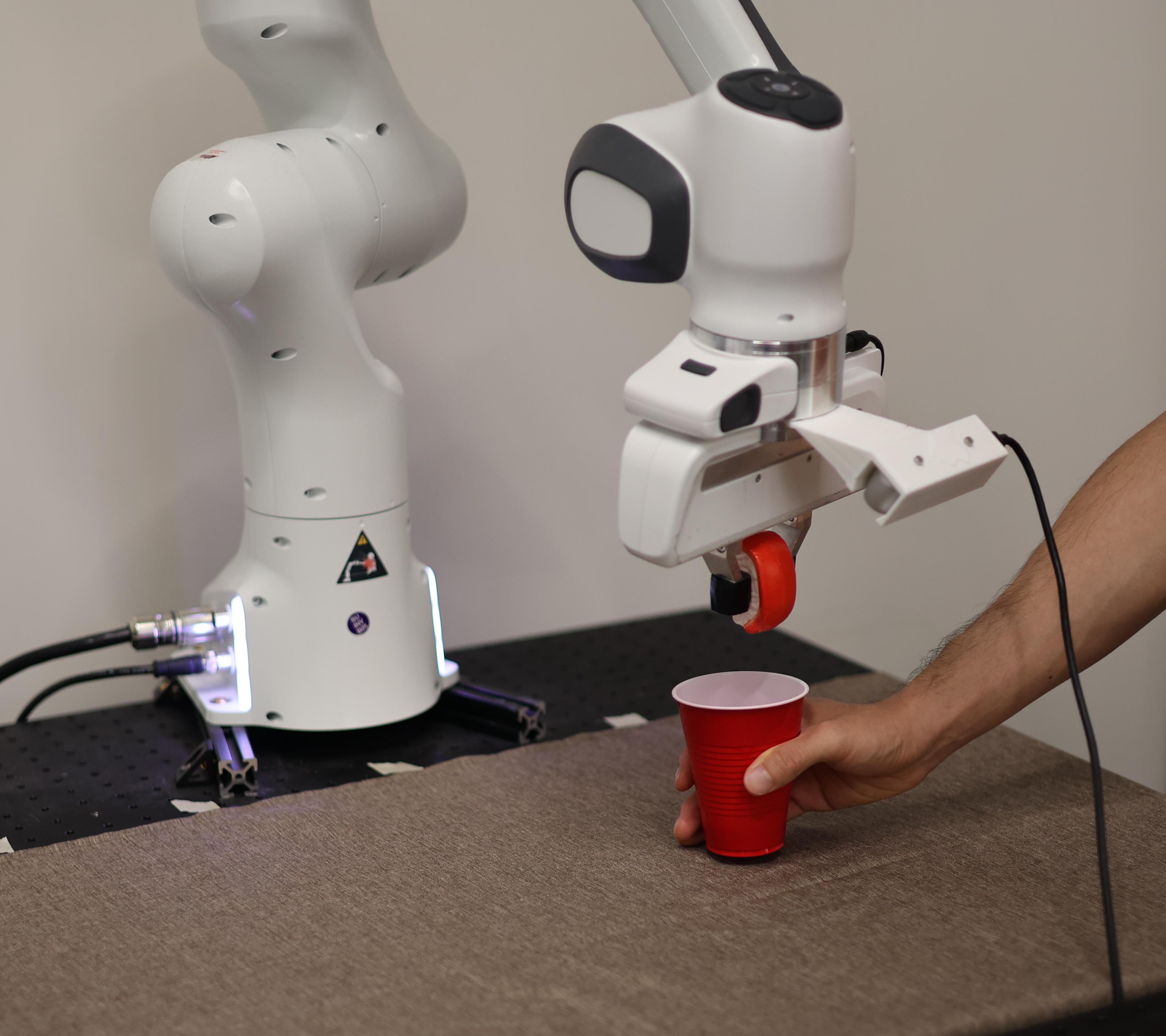}
        \subcaption{decelerate}
    \end{subfigure}
    \hspace{0.1em} \raisebox{10.5ex}{$\rightarrow$} \hspace{0.1em}
    \begin{subfigure}[b]{0.18\linewidth}
        \includegraphics[width=\linewidth]{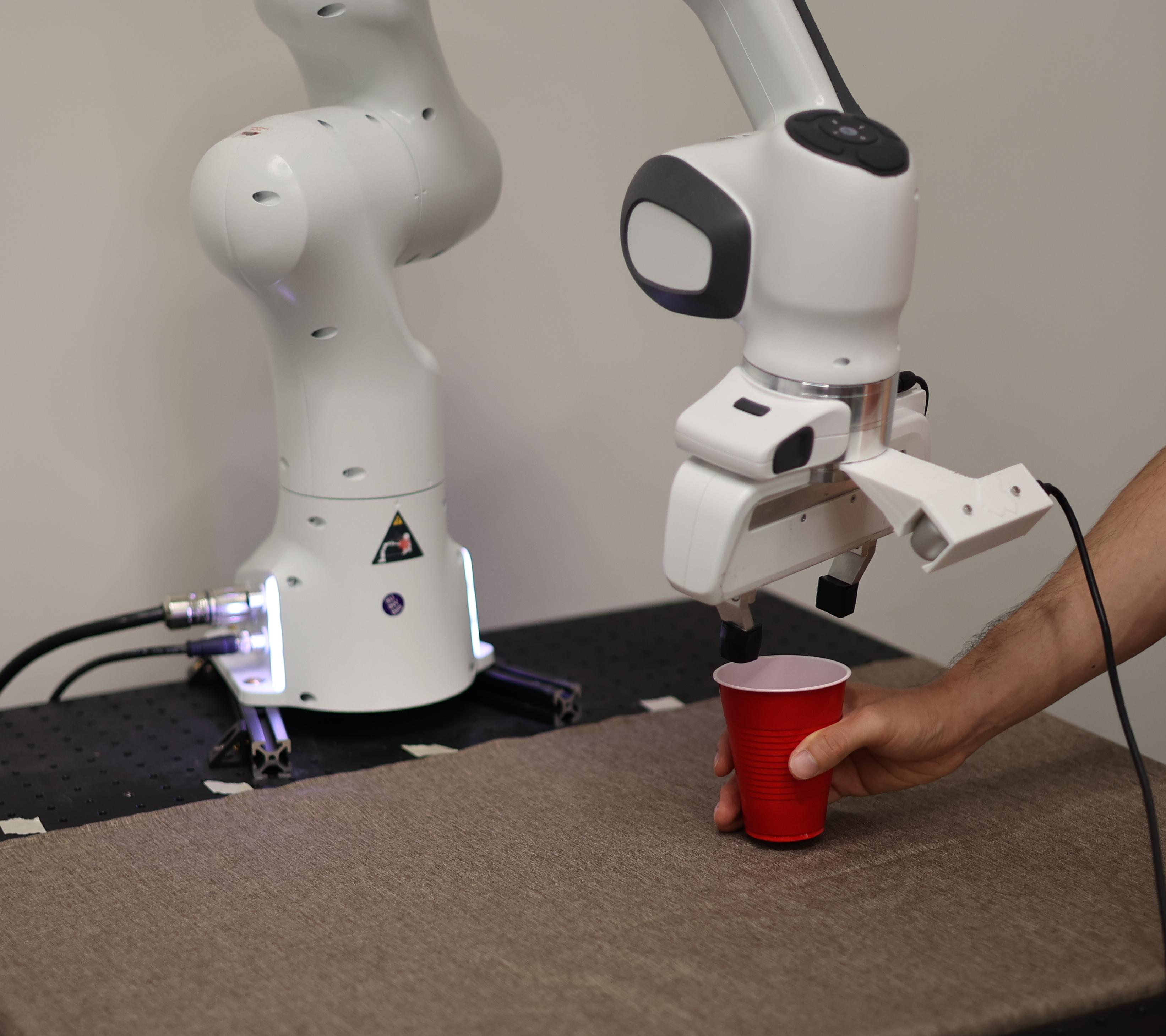}
        \subcaption{place}
    \end{subfigure}
    \caption{Human demonstrations on a Franka Panda robot for a real-world object delivery task. The robot is tasked with delivering an object held in its gripper into a cup held by a human.
    Each demonstration consists of four main stages: (a) initialize the robot position randomly, (b) approach the target cup, (c) slow down near the target cup, and (d) release the object. The position of the target cup may change during an episode.}
    \label{fig:demo}
\end{figure}

\begin{figure}[t]
    \centering
    \begin{subfigure}[b]{0.18\linewidth} 
        \includegraphics[width=\linewidth]{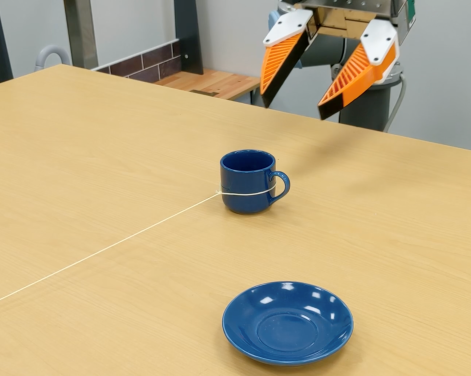} 
        \subcaption{initialize}
    \end{subfigure}
    \begin{subfigure}[b]{0.18\linewidth} 
        \includegraphics[width=\linewidth]{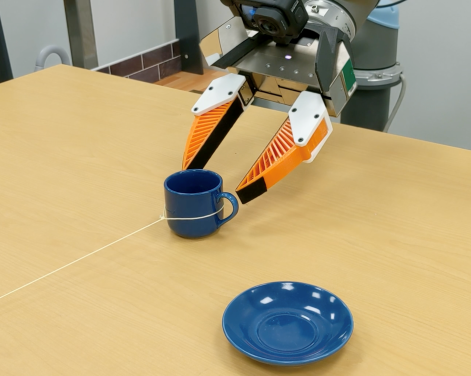} 
        \subcaption{approach}
    \end{subfigure}
    \begin{subfigure}[b]{0.18\linewidth} 
        \includegraphics[width=\linewidth]{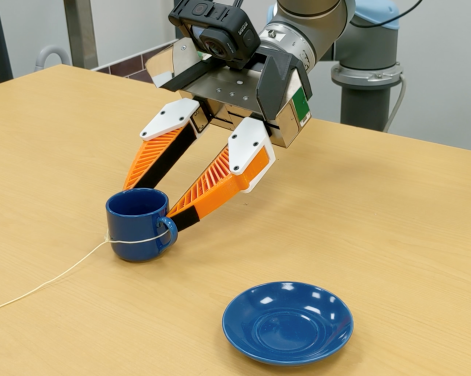} 
        \subcaption{grasp}
    \end{subfigure}
    \begin{subfigure}[b]{0.18\linewidth} 
        \includegraphics[width=\linewidth]{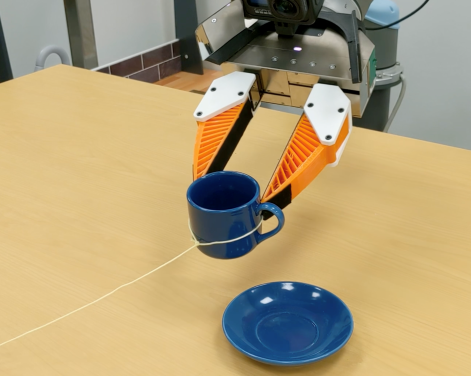} 
        \subcaption{pick}
    \end{subfigure}
    \begin{subfigure}[b]{0.18\linewidth} 
        \includegraphics[width=\linewidth]{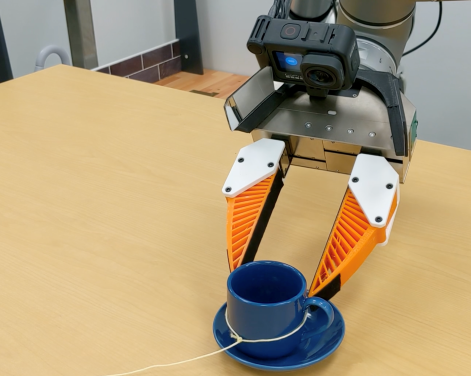} 
        \subcaption{place}
    \end{subfigure}
    \caption{The robot is tasked with picking up a cup and placing it on a saucer nearby. The four main stages are (a) initializing the robot, (b) approaching the target cup, (c) grasping the target cup, (d) picking up the cup, and (e) placing the cup on the target saucer. The position of the target cup may change during an episode.}
    \label{fig:cup_replacement_photos}
\end{figure}

\subsubsection{Dynamic Picking}

\paragraph{Task.}
Next, we consider a task where the robot is required to pick up a cup and place it onto a nearby saucer. The cup was pulled with a string until the robot's gripper successfully grasped it. The task consists of five main stages, which are illustrated in \cref{fig:cup_replacement_photos}. This setup also tests the robot's capability to interact with a dynamic environment, a critical challenge in real-world applications.

\paragraph{Policy.}
We utilized the publicly available diffusion policy checkpoint from UMI~\citep{chi2024universal} without any additional fine-tuning. Notably, the policy was originally trained using demonstrations in a static setting, where the cup's position remained constant throughout the task. Our experimental setup mirrored the one described by UMI, using the same UR5 robot hardware. This allowed us to directly evaluate the policy's transferability to a dynamic environment, where the cup's position changes during the task. Due to the absence of an early checkpoint, we omitted negative samples in forward contrast, focusing solely on positive consistency discussed in~\cref{app:contrast}.

\paragraph{Evaluation.}
We evaluated {\methodacro} against three baselines: vanilla random sampling in both open-loop and closed-loop configurations, and EMA (closed-loop). These methods were tested under two conditions: \textit{static target}, where the cup remained in a fixed position, and \textit{dynamic target}, where the cup was moved using the string. Each method-setting combination was tested across 20 episodes, with the initial positions of the cup and saucer kept consistent to ensure controlled comparisons.

\paragraph{Results.}
The results, summarized in \cref{fig:cup_rearrange}, highlight the challenges of the dynamic setting. Open-loop vanilla sampling performed poorly due to its inability to adapt to the cup's movements, often failing to approach the cup as it was pulled. While closed-loop vanilla sampling showed improved reactivity, it suffered from inconsistent trajectories, resulting in jittery behavior when attempting to grasp and place the cup. Similarly, closed-loop EMA sampling demonstrated higher adaptability to environmental changes but often failed to firmly grasp the cup, likely due to the limitations of naive averaging, which compromises commitment to a specific strategy. In contrast, {\methodacro} achieved at least a 2x improvement in success rate compared to all other methods in the dynamic setting, while maintaining its performance in the static setting, demonstrating both adaptability and precision in dynamic environments.

\section{Additional Discussions}

\revision{

\paragraph{Relation to Option Discovery.}  
Action chunking and option discovery share similarities in modeling temporally extended actions. However, their designs and outcomes are often different.
Option discovery typically aims to learn hierarchical policies, explicitly discovering high-level skills from low-level action sequences~\citep{Bagaria2020OptionDU,jiang2022learningoptionscompression,jiang2023fedskill,Zhao_2023}. 
In contrast, action chunking operates directly on low-level action sequences with fixed horizons, bypassing the need for abstraction into high-level skills.
This simplicity has proven effective in large-scale robotic foundation models~\citep{team2024octo,black2024pi_0}, addressing the scalability challenge that option discovery has yet to overcome.
Future work could explore variable-horizon option discovery as a compelling alternative to action chunking, potentially combining the benefits of temporal abstraction with the scalability demonstrated by action chunking. 

\paragraph{Relation to Long Context.} 
Our analysis in~\cref{sec:analysis} highlights the benefits of extending the action horizon to better capture temporal dependencies across actions.
Another natural approach to capturing these dependencies is to extend the context length.
However, long-context policies often suffer from robustness issues, due to spurious correlations between past and future actions, as evidenced in Appendix A2 and studied in~\citet{dehaanCausalConfusionImitation2019,wenFightingCopycatAgents2020}.
Nevertheless, the increasing availability of large-scale robotic datasets for pre-training and fine-tuning may help mitigate these challenges. 
Advances in leveraging long context may open up new opportunities in policy design, such as long-context transformers~\citep{su2024roformer} and recurrent neural networks~\citep{Hochreiter1997LongSM,zhuo2020discovering}.

}

\section{Proofs} \label{app:proof}

In this section, we will clearly write out the transition dynamics in the training environment to be $P_\mathrm{train}(s_{t+1} \mid s_t, a_t)$, the transition dynamics in the test environment to be $P_\mathrm{test}(s_{t+1}\mid s_t, a_t)$ and the implicit dynamics model of the training environment to be $\hat{P}_\mathrm{train}(s_{t+1}\mid s_t, a_t)$. Note that the implicit dynamics model attempts to learn the dynamics of the training environment and not the test environment since our policy is assumed to have been trained only on data from the training environment.

\vspace{1em}

First, we establish the following lemma which will help us compare different function classes based on the information they have access to:

\vspace{1em}

\begin{lemma}
    Let $\mathcal{L}$ be a convex function and let $X$ and $Y$ be two random variables. Let $G$ be the class of functions $g(X)$ that accept $X$ as an input. Then $$\min_{g(X) \in G} \mathbb{E}_{X, Y}\left[ \mathcal{L}(f(X, Y), g(X))\right] = \mathbb{E}_{X}\left[ \min_{c \in \mathbb{R}} \mathbb{E}_{Y}\left[ \mathcal{L}(f(X, Y), c) |X\right]\right].$$
    \label{lemma:min_exp}
\end{lemma}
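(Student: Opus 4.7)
The plan is to reduce the functional minimization on the left-hand side to a pointwise minimization over constants by conditioning on $X$, using the tower property and then choosing $g$ to be a pointwise minimizer. Concretely, I would first rewrite
\begin{equation*}
\mathbb{E}_{X,Y}\left[\mathcal{L}(f(X,Y), g(X))\right] = \mathbb{E}_X\left[\mathbb{E}_Y\left[\mathcal{L}(f(X,Y), g(X))\mid X\right]\right]
\end{equation*}
by the tower property. Fix any realization $X = x$; then $g(X) = g(x)$ is a constant with respect to the inner expectation, so
\begin{equation*}
\mathbb{E}_Y\left[\mathcal{L}(f(X,Y), g(X))\mid X = x\right] \;\geq\; \min_{c \in \mathbb{R}} \mathbb{E}_Y\left[\mathcal{L}(f(X,Y), c)\mid X = x\right].
\end{equation*}
Taking expectations over $X$ and then an infimum over $g \in G$ yields the inequality ``LHS $\geq$ RHS''.

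For the reverse inequality, I would define $g^*(x) := \arg\min_{c \in \mathbb{R}} \mathbb{E}_Y[\mathcal{L}(f(X,Y), c) \mid X = x]$. By convexity of $\mathcal{L}$ in its second argument, the inner objective $c \mapsto \mathbb{E}_Y[\mathcal{L}(f(X,Y), c)\mid X=x]$ is convex in $c$, so the argmin is well-defined (as a point or an interval from which we can select the smallest element). Plugging $g^*$ back in, by the tower property again,
\begin{equation*}
\mathbb{E}_{X,Y}\left[\mathcal{L}(f(X,Y), g^*(X))\right] = \mathbb{E}_X\left[\min_{c \in \mathbb{R}} \mathbb{E}_Y[\mathcal{L}(f(X,Y), c)\mid X]\right],
\end{equation*}
which gives ``LHS $\leq$ RHS'' and hence equality, provided $g^* \in G$.

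The main obstacle I anticipate is the measurability / selection step: one must argue that $g^*$ is a valid element of the function class $G$, i.e., that a measurable minimizer exists. Convexity of $\mathcal{L}$ in the second argument makes the inner expectation convex and lower semicontinuous in $c$, so a measurable selection theorem (e.g., the measurable maximum theorem, or simply taking $g^*(x)$ to be the leftmost minimizer of a convex function in $c$) supplies a measurable $g^*$. If $G$ is taken to be all measurable functions from the support of $X$ to $\mathbb{R}$, this completes the argument; otherwise, the lemma should be read as implicitly assuming $G$ is rich enough to contain such pointwise selectors. I would briefly note this caveat, since the rest of the proof is purely an application of the tower property.
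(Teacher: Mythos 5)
Your proof is correct and follows essentially the same route as the paper's: both directions rest on the tower property, with the lower bound obtained by noting that $g(x)$ is a constant once $X=x$ is fixed, and the upper bound by substituting the pointwise minimizer $c^*(X)$, which is itself an element of $G$. Your added remark on measurable selection of the argmin is a caveat the paper leaves implicit, but it does not change the argument.
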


\begin{proof}
The left hand side is less than or equal to the right hand side by the following logic:
{\footnotesize
\allowdisplaybreaks
\begin{align*}
\mathbb{E}_{X}\left[ \min_{c \in \mathbb{R}} \mathbb{E}_Y\left[ \mathcal{L}(f(X,Y), c) | X \right]\right] &= \mathbb{E}_{X}\left[ \mathbb{E}_Y\left[ \mathcal{L}(f(X,Y), c^*(X)) | X \right]\right]\\
& \geq \min_{g(X) \in G} \mathbb{E}_{X,Y}\left[ \mathcal{L}(f(X,Y), g(X))\right]
\end{align*}
}

where we used $c^*(X) := \arg \min_c \mathbb{E}_X [ \mathcal{L}(f(X,Y), c)|X]$. We get the inequality by recognizing that $\mathbb{R} \subsetneq G$. For the reverse inequality, consider any $g(X) \in G$:
{\footnotesize
\allowdisplaybreaks
\begin{align*}
\mathbb{E}[\mathcal{L}(f(X,Y), g(X))] &= \mathbb{E}_X\left[ \mathbb{E}_Y \left[ \mathcal{L}(f(X,Y), g(X)) | X \right]\right] \\
& \geq \mathbb{E}_X \left[ \min_g \mathbb{E}_Y \left[ \mathcal{L}(f(X,Y), g(X)) | X \right]\right] \\
& = \mathbb{E}_{X}\left[ \min_{c} \mathbb{E}_Y\left[ \mathcal{L}(f(X,Y), c) | X \right]\right].
\end{align*}
}
With these two inequalities, we conclude.
\end{proof}

\vspace{1em}

Next, we prove the following lemma. This straightforward, and almost trivial, result is provided as a separate lemma because we simplify terms in this manner quite often throughout our proofs. 

\vspace{1em}

\begin{lemma}
\label{lemma:boundingerror}
    Let $\mathcal{L}$ be a convex function and let $X, Y$ be two random variables. Then, 
    \begin{align*}
        & \min_{f} \mathbb{E}_{X,Y}\left[P(X'=X) \mathcal{L}(f(X'), S(X, Y))\right] + \mathbb{E}_{X,Y}\left[ \sum_{X' \neq X}P(X') \mathcal{L}\left(f(X'), S(X, Y)\right)\right] \\
        & \leq \min_{f} \{ \mathbb{E}_{X,Y}\left[ \mathcal{L}(f(X), S(X, Y))\right] \} + \epsilon
    \end{align*}
    where $\epsilon = \max_{X' \neq X, X, Y} \{ \mathcal{L}(f^*(X'), S(X, Y) \}$ and $f^* = \arg\min_f \{ \mathbb{E}_{X,Y}\left[ \mathcal{L}(f(X), S(X,Y) \right] \} $.
\end{lemma}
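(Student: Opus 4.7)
The approach is straightforward: I would simply plug $f = f^*$ as a candidate into the minimization on the left-hand side, then bound each of the two resulting terms separately using only (i) that $P(X'=X)\le 1$, (ii) that $P(\cdot)$ is a probability distribution so $\sum_{X'\neq X}P(X')\le 1$, and (iii) the definition of $\epsilon$ as a uniform upper bound on the mismatch loss under $f^*$. No convexity or calculus is needed here --- this lemma is really just a bookkeeping step used elsewhere in the appendix to isolate the ``inference error'' cost.

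First, since $f^*$ is a particular function, the min on the LHS is upper bounded by the objective evaluated at $f = f^*$, giving
\begin{align*}
\text{LHS} \;\le\; \mathbb{E}_{X,Y}\!\left[P(X'=X)\,\mathcal{L}(f^*(X), S(X,Y))\right]
 + \mathbb{E}_{X,Y}\!\left[\sum_{X'\neq X} P(X')\,\mathcal{L}(f^*(X'), S(X,Y))\right].
\end{align*}
For the first summand, I would use $P(X'=X) \le 1$ together with nonnegativity of $\mathcal{L}$ to drop the probability factor, yielding $\mathbb{E}_{X,Y}[\mathcal{L}(f^*(X), S(X,Y))]$, which equals $\min_f \mathbb{E}_{X,Y}[\mathcal{L}(f(X), S(X,Y))]$ by the definition of $f^*$.

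For the second summand, I would use the pointwise bound $\mathcal{L}(f^*(X'), S(X,Y)) \le \epsilon$ (valid for every triple with $X'\neq X$ by the definition of $\epsilon$ as the max over such triples), pull $\epsilon$ out of the sum and expectation, and use $\sum_{X'\neq X}P(X') \le 1$ to get an upper bound of $\epsilon$. Combining the two bounds gives exactly the claimed inequality.

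\textbf{Main obstacle.} There is no serious technical obstacle; the only subtlety is making the assumptions explicit. Specifically, dropping $P(X'=X)$ from the first summand requires $\mathcal{L}\ge 0$, and bounding the tail sum by $1$ requires $P$ to be a valid probability distribution on the inferred value $X'$. Both hold in the intended application (inference of unobserved states under transition dynamics as in Assumption 2), but I would state them up front so the remaining algebra is a one-line chain of inequalities.
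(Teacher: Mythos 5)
Your proposal is correct and follows essentially the same route as the paper's proof: both bound the matched term via $P(X'=X)\le 1$ and the optimality of $f^*$, and bound the mismatched term by $\epsilon$ using $\sum_{X'\neq X}P(X')\le 1$; the only difference is that you substitute $f^*$ at the outset while the paper first relaxes the matched term and then substitutes $f^*$, which is immaterial. Your remarks that convexity is not actually used and that nonnegativity of $\mathcal{L}$ is needed to drop the probability factor are accurate and make explicit what the paper leaves implicit.
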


\begin{proof}
{\allowdisplaybreaks
\begin{align*}
    & \min_{f} \mathbb{E}_{X,Y}\left[P(X'=X) \mathcal{L}(f(X'), S(X, Y))\right] + \mathbb{E}_{X,Y}\left[ \sum_{X' \neq X}P(X') \mathcal{L}\left(f(X'), S(X, Y)\right)\right]\\
    & \leq \min_{f} \mathbb{E}_{X,Y}\left[ \mathcal{L}(f(X), S(X, Y))\right] + \mathbb{E}_{X,Y}\left[\sum_{X' \neq X}P(X') \mathcal{L}\left(f(X'), S(X, Y)\right)\right]\\
    & \leq \min_{f} \{ \mathbb{E}_{X,Y}\left[ \mathcal{L}(f(X), S(X, Y))\right] \} + \mathbb{E}_{X,Y}\left[\sum_{X' \neq X}P(X') \mathcal{L}\left(f^*(X'), S(X, Y)\right)\right]\\
    & \leq \min_{f} \{ \mathbb{E}_{X,Y}\left[ \mathcal{L}(f(X), S(X, Y))\right] \} + \mathbb{E}_{X,Y}\left[\sum_{X' \neq X}P(X') \epsilon \right]\\
    & \leq \min_{f} \{ \mathbb{E}_{X,Y}\left[ \mathcal{L}(f(X), S(X, Y))\right] \} + \epsilon\\
\end{align*}
}
\end{proof}

\subsection{Assumptions}
\label{app:assumptions}

Considering that recent policies often use a short context length $c$, we assume the range of temporal dependency modeled by a $(c,h)$-policy is limited: 

\vspace{1em}

\textbf{Assumption 1.} The sum of context length and action horizon is less than the length of temporal dependency in expert demonstrations, $c+h < k$.\\

This assumption allows us to focus on the problem that is relevant to us and allows us to ignore edge cases. However, our analysis \textit{can} be extended to the case where this assumption does not hold. In the case where $c + h \geq k$, the larger action chunk model will not get any advantage (making  $\alpha_b = 0$) since the additional states it has observed in time steps $\tau_b$ are irrelevant to $a_t$. Then, the longer horizon policy either attains the same expected loss the shorter horizon policy or suffer greater loss from not having observed the recent past states.

\vspace{1em}

\textbf{Assumption 2.} An optimal $\pi_{c,h}$ must infer the unobserved states based on the observed states and actions by modeling the transition dynamics $\hat{P}_\mathrm{train}(s_{t'} \mid s_{t'-1},a_{t'-1})$ accurately for all time step $t'$. We consider this implicit model to be the same for both $\pi_{(c, h)}$ and $\pi_{(c, h+d)}$. 

\vspace{1em}

In other words, we can write using the law of total probability:
\begin{align*}
    & \pi_{(c, h)}(a_t|s_{t-h-c:t-h},a_{t-h:t-1}) \\
    &= \mathbb{E}_{s_{t-k:t-h-c-1},s_{t-h+1:t} \sim \hat{P}_\mathrm{train}}\left[\pi_{(k, 1)}(a_t|s_{t-k:t})|s_{t-h-c:t-h}, a_{t-h:t-1}\right] 
\end{align*}

and 

\begin{align*}
    & \pi_{(c, h+d)}(a_t|s_{t-h-d-c:t-h-d},a_{t-h-d:t-h-1}) \\
    &= \mathbb{E}_{s_{t-k:t-h-d-c-1},s_{t-h-d+1:t} \sim \hat{P}_\mathrm{train}}\left[\pi_{(k, 1)}(a_t|s_{t-k:t})|s_{t-h-d-c:t-h-d}, a_{t-h-d:t-1}\right]. 
\end{align*}

\subsection{Definitions}
\label{app:theorydefinitions}
\vspace{1em}

We, first, analyze the effect of reducing context horizon. We show that, provided action horizon is constant, decreasing context horizon causes performance of the optimal policy to decrease.

\vspace{1em}

Consider a $(c, h)$-policy (i.e., the policy has context length $c$ and action horizon $h$) whose probability of taking action $a_t$ at time $t$ in a chunk generated at $t$ is referred to as $\pi_{(c,h)} := \pi_{(c, h)}(a_t|s_{t-c:t}).$ On the other hand, consider a $(c+1, h)$-policy whose probability of taking action $a_t$ in a chunk generated at time $t$ is referred to as $\pi_{(c+1,h)} := \pi_{(c+1,h)}(a_t|s_{t-c-1:t}).$ Lastly, consider a $(k, 1)$-expert whose probability of taking action $a_t$ at time $t$ is $\pi^*$. 

\vspace{1em}

\begin{proposition} 
\label{contextvaluable}
Let $\mathcal{L}$ be a non-linear, convex function. Let $c < k$. Let $G := \{a_t, s_{t-k:t-c-1}, z_{t-k:t}\}$ and let $C :=  \{ s_{t-c:t}\}$. Then, 
{\footnotesize
    \begin{align*}
        & \min_{\pi_{(c+1,h)}} \mathbb{E}_{G} \left[ \mathcal{L}(\pi_{(c+1,h)}, \pi^*) \Big| C\right] \leq \min_{\pi_{(c,h)}} \mathbb{E}_{G} \left[ \mathcal{L}( \pi_{(c,h)}, \pi^*) \Big| C \right]    
\end{align*}}
In particular, this is an equality if and only if $s_{t-c-1} \sim P_\mathrm{test}(\cdot \mid C)$ and $\hat{s}_{t-c-1} \sim \hat{P}_\mathrm{train}(\cdot \mid C)$ take on only one and the same value almost surely.
\end{proposition}

\begin{proof}
We refer to the class of functions that accept $a_t$ and $s_{t-c-1:t}$ as inputs as $X_{2}$. Similarly, the class of functions that do not accept $a_t$ as inputs but accept $s_{t-c-1:t}$ as inputs is $X_{1}$. The function class that accepts only $s_{t-c:t}$ and not $s_{t-c-1}$ or $a_t$ as inputs are elements of $X_0$. Lastly, the function class that accepts $s_{t-c:t}$ and $a_t$ as inputs, but not $s_{t-c-1}$, are elements of $X_{-1}$. 
{\footnotesize
\allowdisplaybreaks
\begin{align*}
    & \min_{\pi_{(c+1,h)} \in X_{2}} \mathbb{E}_{G} \left[ \mathcal{L}(\pi_{(c+1,h)}, \pi^* \Big| C \right] \\ 
    & = \mathbb{E}_{a_t} \left[ \min_{\pi_{(c+1,h)}' \in X_1} \mathbb{E}_{s_{t-c-1}} \left[ \mathbb{E}_{s_{t-k:t-c-2},z_{t-k:t}} \left[ 
    \mathcal{L}(\pi_{(c+1,h)}', \pi^*) \Big| a_t, s_{t-c-1}, C \right] \Big| a_t, C \right] \Big| C \right] \tag*{(Lemma \ref{lemma:min_exp})}\\ 
    & = \mathbb{E}_{a_t} \left[ \mathbb{E}_{s_{t-c-1}} \left[ \min_{\pi_{(c,h)}' \in X_0} \mathbb{E}_{s_{t-k:t-c-2},z_{t-k:t}} \left[ 
    \mathcal{L}(\pi_{(c,h)}', \pi^*) \Big|  a_t, s_{t-c-1}, C\right] \Big| a_t, C\right] \Big| C \right] \tag*{(Lemma \ref{lemma:min_exp})}\\ 
    & \leq \mathbb{E}_{a_t} \left[ \min_{\pi_{(c,h)}' \in X_0} \mathbb{E}_{s_{t-c-1}} \left[  \mathbb{E}_{s_{t-k:t-c-2},z_{t-k:t}} \left[ 
    \mathcal{L}(\pi_{(c,h)}', \pi^*) \Big| a_t, s_{t-c-1}, C \right] \Big| a_t, C \right] \Big| C \right] \tag*{(Jensen's inequality)}\\ 
    &= \min_{\pi_{(c,h)} \in X_{-1}} \mathbb{E}_{a_t} \left[ \mathbb{E}_{s_{t-c-1}} \left[  \mathbb{E}_{s_{t-k:t-c-2},z_{t-k:t}} \left[ 
    \mathcal{L}(\pi_{(c,h)}, \pi^*) \Big| a_t, s_{t-c-1}, C \right] \Big| a_t, C \right] \Big| C \right] \tag*{(Lemma \ref{lemma:min_exp}).}\\ 
\end{align*}
}

Use the law of total expectation to conclude. The equality conditions for Jensen's inequality provides us the equality condition for this relationship too. 
\end{proof}

\vspace{1em}

Now, we formalize the definitions of \textit{Expected Observation Advantage} and \textit{Maximum Inference Disadvantage}. 

\vspace{1em}

Recall that, in~\cref{sec:theory}, we have two policies: $\pi_{(c, h)}$ and $\pi_{(c, h+d)}$; the former sees more recent states while the latter remembers more past states. First, we define an agent that gets access to all the information that both learners, combined, have: a $(c+d, h)$-policy whose probability of taking action $a_t$ in a chunk generated at time $t-h$ is $$\pi_{(c+d,h)} \coloneqq \pi_{(c+d,h)}(a_t|s_{t-h-d-c:t-h},a_{t-h:t-1}).$$ Observe that $\pi_{(c+d, h)}$ has access to more context than $\pi_{(c, h)}$, particularly the knowledge of states $s_{t-h-c-d:t-h-c-1}$.
\vspace{1em}

\textbf{Definition (Expected Observation Advantage ($\alpha_b$)).}  We know, using \cref{contextvaluable}, $\pi_{(c+d, h)}$ has lower divergence with respect to $\pi^*$ than $\pi_{(c, h)}$. We say that the advantage $\pi_{(c+d, h)}$ gets from the extra information is $\alpha_b$. More formally, we say that 
\begin{equation}
    \label{eq:alpha}
    \alpha_{b} \coloneqq \min_{\pi_{(c, h)}} \mathbb{E} \left[ \mathcal{L}( \pi_{(c, h)}, \pi^*) \Big| C \right] - \min_{\pi_{(c+d, h)}} \mathbb{E} \left[ \mathcal{L}(\pi_{(c+d, h)}, \pi^*) \Big| C\right]
\end{equation}

where $C$ is defined as in \cref{PRInequality}. Clearly, $\alpha_b \geq 0$. In particular, $\alpha_b = 0$ when $s_{t-h-d-c:t-h-c-1}$ can be deterministically inferred by $\pi_{(c, h)}$ or when the expert policy is independent of them. 

\vspace{1em}

\textbf{Definition (Maximum Inference Disadvantage ($\epsilon_f$)).}
Consider the maximum divergence that can be accumulated by the ${(c, h+d)}$-policy from not knowing the recent states at time steps $s_{t-h-d+1:t-h}$, and let that be $\epsilon_{f}$. 
More formally, we say that, for fixed $C$ from \cref{PRInequality}, any states in 
$\mathcal{S}^- \coloneqq \{s_{t-k:t}\} \setminus \mathcal{S}^+$, any $z_{t-k:t}$, and any $\hat{s}_{t-h-d+1:t-h} \neq {s}_{t-h-d+1:t-h}$, the following holds:
\begin{equation}
\label{eq:epsilon}
\mathcal{L}(\pi_{(c+d,h)}(a_t|s_{t-h-d-c:t-h-d}, \hat{s}_{t-h-d+1:t-h} \neq s_{t-h-d+1:t-h}, a_{t-h:t-1}), \pi^*) \leq \epsilon_{f}.
\end{equation}

Here, $\pi_{(c+d, h)} \coloneqq \arg\min_{\pi_{(c+d,h)}}\mathbb{E}[\mathcal{L}(\pi_{(c+d,h)}, \pi^*)|C]$ is the optimal $(c+d,h)$-policy. 

\vspace{1em}

To define $\alpha_f$ and $\epsilon_b$, we prove a second version of \cref{contextvaluable}. Consider a $(c, h)$-policy whose probability of taking action $a_t$ at time $t$ in a chunk generated at $t$ is referred to as $\pi_{(c,h)} := \pi_{(c, h)}(a_t|s_{t-c:t}).$ On the other hand, consider a $(c-1, h+1)$-policy whose probability of taking action $a_t$ in a chunk generated at time $t-1$ is referred to as $\pi_{(c-1,h+1)} := \pi_{(c-1,h+1)}(a_t|s_{t-c:t-1}).$ Lastly, consider a $(k, 1)$-expert whose probability of taking action $a_t$ at time $t$ is $\pi^*$. 
\vspace{1em}

\begin{proposition} \label{contextvaluablev2} Let $\mathcal{L}$ be a non-linear, convex function. Let $c < k$. Let $G := \{a_t, s_{t-k:t-c-1},s_{t}, z_{t-k:t}\}$ and $C :=  \{ s_{t-c:t-1}, a_{t-1}\}$. Then, 
{\footnotesize
\begin{align*}
    & \min_{\pi_{(c,h)}} \mathbb{E}_{G} \left[ \mathcal{L}(\pi_{(c,h)}, \pi^*) \Big| C\right] \leq \min_{\pi_{(c-1,h+1)}} \mathbb{E}_{G} \left[ \mathcal{L}( \pi_{(c-1,h+1)},  \pi^*) \Big| C \right].
\end{align*}}
In particular, this is an equality if and only if the state $s_t \sim P_{\mathrm{test}}(\cdot \mid C)$ and $\hat{s}_t \sim \hat{P}_\mathrm{train}(\cdot \mid C)$ takes on only one and the same value almost surely. 
\end{proposition}

\begin{proof}
The proof is similar to that of \cref{contextvaluable}. We refer to the class of functions that accept $a_t$ and $s_{t-c:t}$ as inputs as $X_{2}$. Similarly, the class of functions that do not accept $a_t$ as inputs but accept $s_{t-c:t}$ as inputs is $X_{1}$. The function class that accepts only $s_{t-c:t-1}$ and not $s_{t}$ or $a_t$ as inputs are elements of $X_0$. Lastly, the function class that accepts $s_{t-c:t-1}$ and $a_t$ as inputs, but not $s_{t}$, are elements of $X_{-1}$. 
{\footnotesize
\begin{align*}
    & \min_{\pi_{(c,h)} \in X_{2}} \mathbb{E}_{G} \left[ \mathcal{L}(\pi_{(c,h)}, \pi^* \Big| C \right] \\ 
    & = \mathbb{E}_{a_t} \left[ \min_{\pi_{(c,h)}' \in X_1} \mathbb{E}_{s_{t}} \left[ \mathbb{E}_{s_{t-k:t-c-1},z_{t-k:t}} \left[ 
    \mathcal{L}(\pi_{(c,h)}', \pi^*) \Big| a_t, s_{t}, C \right] \Big| a_t, C \right] \Big| C \right] \tag*{(Lemma \ref{lemma:min_exp})}\\ 
    & = \mathbb{E}_{a_t} \left[ \mathbb{E}_{s_{t}} \left[ \min_{\pi_{(c-1,h+1)}' \in X_0} \mathbb{E}_{s_{t-k:t-c-2},z_{t-k:t}} \left[ 
    \mathcal{L}(\pi_{(c-1,h+1)}', \pi^*) \Big|  a_t, s_{t}, C\right] \Big| a_t, C\right] \Big| C \right] \tag*{(Lemma \ref{lemma:min_exp})}\\ 
    & \leq \mathbb{E}_{a_t} \left[ \min_{\pi_{(c-1,h+1)}' \in X_0} \mathbb{E}_{s_{t}} \left[  \mathbb{E}_{s_{t-k:t-c-1},z_{t-k:t}} \left[ 
    \mathcal{L}(\pi_{(c-1,h+1)}', \pi^*) \Big| a_t, s_{t}, C \right] \Big| a_t, C \right] \Big| C \right] \tag*{(Jensen's inequality)}\\ 
    &= \min_{\pi_{(c-1,h+1)} \in X_{-1}} \mathbb{E}_{a_t} \left[ \mathbb{E}_{s_{t}} \left[  \mathbb{E}_{s_{t-k:t-c-1},z_{t-k:t}} \left[ 
    \mathcal{L}(\pi_{(c-1,h+1)}, \pi^*) \Big| a_t, s_{t}, C \right] \Big| a_t, C \right] \Big| C \right] \tag*{(Lemma \ref{lemma:min_exp}).}\\ 
\end{align*}
}

Use the law of total expectation to conclude. The equality condition can be seen from the equality condition of Jensen's inequality.
\end{proof}

Using this, we can define $\epsilon_b$ and $\alpha_f$ in a similar manner:
\vspace{1em}

\textbf{Definition (Expected Observation Advantage ($\alpha_f$)).} Recall that we have two models: $\pi_{(c, h)}$ and $\pi_{(c, h+d)}$ and a hypothetical $(c+d, h)$-policy that has access to all the information both our learners have (as in \cref{eq:alpha} and \cref{eq:epsilon}). Observe that $\pi_{(c+d, h)}$ has access to more context than $\pi_{(c, h+d)}$, particularly the knowledge of states $s_{t-h-d+1:t-h}$. Therefore, we know, using \cref{contextvaluablev2}, $\pi_{(c+d, h)}$ has lower divergence with respect to $\pi^*$ than $\pi_{(c, h+d)}$. We say that the advantage $\pi_{(c+d, h)}$ gets from the extra information is $\alpha_f$. More formally, we say that 

\begin{equation}
    \label{eq:alpha_f}
    \alpha_{f} = \min_{\pi_{(c, h+d)}} \mathbb{E} \left[ \mathcal{L}( \pi_{(c, h+d)}, \pi^*) \Big| C \right] - \min_{\pi_{(c+d, h)}} \mathbb{E} \left[ \mathcal{L}(\pi_{(c+d, h)}, \pi^*) \Big| C\right]
\end{equation}

where $C$ is defined as in Proposition 1. Clearly, $\alpha_f \geq 0$. In particular, $\alpha_f = 0$ when $\pi_{(c, h+d)}$ can infer $s_{t-h-d+1:t-h}$ perfectly. This makes sense--in the static environment, observing these states does not provide any advantage since the optimal $\pi_{(c, h+d)}$ can infer these states anyway using the actions taken at those time steps (assuming that the implicit dynamics model is accurate).
\vspace{1em}

\textbf{Definition (Maximum Inference Disadvantage ($\epsilon_b$)).} Consider the maximum divergence that can be accumulated by the ${(c, h)}$-model from not knowing the past states $s_{t-h-d-c:t-h-c-1}$ and let that be $\epsilon_{b}$. More formally, we say that, for fixed $C$ from \cref{PRInequality}, any states in $\mathcal{S}^-$, any $z_{t-k:t}$ and any $\hat{s}_{t-h-d-c:t-h-c-1} \neq {s}_{t-h-d-c:t-h-c-1}$:
\begin{equation}
\label{eq:epsilon_b}
\mathcal{L}(\pi_{(c+d,h)}(a_t|\hat{s}_{t-h-d-c:t-h-c-1} \neq {s}_{t-h-d-c:t-h-c-1}, s_{t-h-c:t-h}, a_{t-h:t-1}), \pi^*) \leq \epsilon_{b}.
\end{equation}

Here, $\pi_{(c+d, h)} \coloneqq \arg\min_{\pi_{(c+d,h)}}\mathbb{E}[\mathcal{L}(\pi_{(c+d,h)}, \pi^*)|C]$ is the optimal $(c+d,h)$-policy.

\vspace{1em}
As a warmup, we see that the intuitive relationship between $\alpha_f$ and $\epsilon_f$ (and the same for $\alpha_b$ and $\epsilon_b$) holds:

\begin{proposition}
    \label{prop:alphaepsilonrelation}
    $\alpha_f \leq \epsilon_f$ and $\alpha_b \leq \epsilon_b$. 
\end{proposition}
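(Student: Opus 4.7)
The plan is to prove $\alpha_f \leq \epsilon_f$ in detail; the argument for $\alpha_b \leq \epsilon_b$ will be symmetric, obtained by swapping the roles of the "forward" unobserved window $s_{t-h-d+1:t-h}$ with the "backward" unobserved window $s_{t-h-d-c:t-h-c-1}$ and replacing $\pi_{(c,h+d)}$ with $\pi_{(c,h)}$ throughout. Rearranging the definition in \cref{eq:alpha_f}, showing $\alpha_f \leq \epsilon_f$ amounts to establishing
\[
\min_{\pi_{(c,h+d)}} \mathbb{E}_G\!\left[\mathcal{L}(\pi_{(c,h+d)}, \pi^*) \mid C\right] \leq \min_{\pi_{(c+d,h)}} \mathbb{E}_G\!\left[\mathcal{L}(\pi_{(c+d,h)}, \pi^*) \mid C\right] + \epsilon_f,
\]
so my strategy is to exhibit a specific feasible $(c,h+d)$-policy whose expected loss is bounded by the right-hand side.

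The construction is as follows. Let $\pi^*_{(c+d,h)}$ denote the minimizer on the right-hand side. Since $\pi_{(c,h+d)}$ does not observe the window $s_{t-h-d+1:t-h}$, I would define a candidate policy $\pi'_{(c,h+d)}$ by marginalizing $\pi^*_{(c+d,h)}$ against the posterior $P(\hat{s}_{t-h-d+1:t-h} \mid C)$, which is computable from $C$ alone (using Assumption 2 on modeling the transition dynamics). Then $\pi'_{(c,h+d)}$ is a valid $(c,h+d)$-policy and provides an upper bound on the left-hand-side minimum.

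Now I would apply Jensen's inequality to the convex loss $\mathcal{L}$ in its first argument, which yields
\[
\mathcal{L}(\pi'_{(c,h+d)}, \pi^*) \leq \sum_{\hat{s}} P(\hat{s} \mid C)\, \mathcal{L}\!\left(\pi^*_{(c+d,h)}(\cdot \mid C, \hat{s}), \pi^*\right).
\]
Taking the conditional expectation over $G$ and splitting the summation into the term $\hat{s} = s_{t-h-d+1:t-h}$ (the true window, which is part of $G$) and the remaining terms $\hat{s} \neq s_{t-h-d+1:t-h}$, the first piece is at most $\min_{\pi_{(c+d,h)}} \mathbb{E}_G[\mathcal{L}(\pi_{(c+d,h)},\pi^*) \mid C]$ after dropping the factor $P(\hat{s}=s \mid C) \leq 1$, while the second piece is uniformly bounded by $\epsilon_f$ using the definition in \cref{eq:epsilon}, since the total remaining probability mass is at most one. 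This is exactly the bookkeeping handled by Lemma \ref{lemma:boundingerror}, which I would invoke to carry out the decomposition cleanly.

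The main obstacle I anticipate is verifying that the constructed policy $\pi'_{(c,h+d)}$ is genuinely feasible, i.e., that the marginalization uses only information in $C$ and that the resulting conditional expectations are properly nested. A secondary subtlety is that $\epsilon_f$ is defined with respect to the optimal $\pi^*_{(c+d,h)}$ evaluated at a counterfactual state window, so I must apply the bound \cref{eq:epsilon} pointwise to each $\hat{s} \neq s$ before pulling the sum out of the expectation. Once this is done, the inequality $\alpha_f \leq \epsilon_f$ follows immediately; the proof of $\alpha_b \leq \epsilon_b$ uses \cref{contextvaluable} in place of \cref{contextvaluablev2} and the bound \cref{eq:epsilon_b} in place of \cref{eq:epsilon}, but is otherwise identical.
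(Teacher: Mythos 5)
Your proposal is correct and follows essentially the same route as the paper's proof: both use Assumption~2 to express the $(c,h+d)$-policy as a marginalization of the optimal $(c+d,h)$-policy over the unobserved window, apply convexity of $\mathcal{L}$ to split into the correct-inference and incorrect-inference terms, bound the former by the $(c+d,h)$-minimum (dropping a probability factor at most one) and the latter by $\epsilon_f$ via Lemma~\ref{lemma:boundingerror}. The only cosmetic difference is that you exhibit the specific minimizer $\pi^*_{(c+d,h)}$ as a feasible candidate, whereas the paper keeps the minimum over the whole $(c+d,h)$ class before bounding; these are equivalent.
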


\begin{proof}
    We prove the first inequality; the second can be proven in the same manner. We use Assumption 2 to write $\pi_{(c, h+d)} = \mathbb{E}_{s_{t-h-d+1:t-h} \sim P}\left[ \pi_{(c+d, h)}\mid s_{t-h-d-c:t-h-d},a_{t-h-d:t-1}\right]$ where $P$ is the environment's transition dynamics.  Let $$P_{\text{correct inference}} = P(\hat{s}_{t-h-d+1:t-h}=s_{t-h-d+1:t-h}|s_{t-h-d-c:t-h-d},a_{t-h-d:t-1})$$ and $$P_{\text{incorrect inference}} = P(\hat{s}_{t-h-d+1:t-h} \neq s_{t-h-d+1:t-h}|s_{t-h-d-c:t-h-d},a_{t-h-d:t-1}).$$

    Then, 
\begin{align*}
\allowdisplaybreaks
    \alpha_{f} &= \min_{\pi_{(c, h+d)}} \mathbb{E} \left[ \mathcal{L}( \pi_{(c, h+d)}, \pi^*) \Big| C \right] - \min_{\pi_{(c+d, h)}} \mathbb{E} \left[ \mathcal{L}(\pi_{(c+d, h)}, \pi^*) \Big| C\right]\\
    &= \min_{\pi_{(c+d, h)}} \mathbb{E} \left[ \mathcal{L}(P_{\text{correct inference}} \pi_{(c+d, h)} + P_{\text{incorrect inference}}{\pi}_{(c+d, h)}, \pi^*) \Big| C \right] \\
    & - \min_{\pi_{(c+d, h)}} \mathbb{E} \left[ \mathcal{L}(\pi_{(c+d, h)}, \pi^*) \Big| C\right]\\
    & \leq \min_{\pi_{(c+d, h)}} 
    \{ \mathbb{E} \left[  P_{\text{incorrect inference}} \mathcal{L} ({\pi}_{(c+d, h)}(\text{conditioning on incorrect inference}), \pi^*) \Big| C \right] \\
    & + \mathbb{E} \left[ P_{\text{correct inference}} \mathcal{L}(\pi_{(c+d, h)}(\text{conditioning on correct inference}), \pi^*) \Big| C \right] \}\\
    & - \min_{\pi_{(c+d, h)}} \mathbb{E} \left[ \mathcal{L}(\pi_{(c+d, h)}, \pi^*) \Big| C\right] \tag*{(Convexity)}\\
    & \leq 
    \mathbb{E} \left[ P_{\text{incorrect inference}} \mathcal{L} (\hat{\pi}_{(c+d, h}^*(\text{conditioning on incorrect inference}), \pi^*)\Big| C \right] \\
    & + \mathbb{E} \left[ \mathcal{L}(\pi_{(c+d, h)}^*, \pi^*) \Big| C \right]
    - \mathbb{E} \left[ \mathcal{L}(\pi_{(c+d, h)}^*, \pi^*) \Big|C\right] \tag*{(Bounding probabilities by 1 and \cref{lemma:boundingerror})}\\
    & \leq \mathbb{E} \left[ P_{\text{incorrect inference}} \epsilon_f\mid C \right] \\
    & \leq \epsilon_f
\end{align*}

Here, $\pi_{(c+d,h)}^* \coloneqq \arg\min_{\pi_{(c+d, h)}} \mathbb{E} \left[ \mathcal{L}(\pi_{(c+d, h)}, \pi^*) \Big| C\right]$. 
\end{proof}

We will provide a tighter bound after the proof of our main theoretical results. 

\vspace{1em}

\textbf{Definition (Forward and Backward Inference).} For a fixed time step $t$ and $C$ (as in \cref{sec:theory}), consider the time steps $\tau_f \coloneqq \{t-h-d+1:t-h\}$ and $\tau_b \coloneqq \{t-h-d-c:t-h-c-1\}$. 



Define $$P_{f}(t') \coloneqq P_\text{test}(S_{t'} = g_{t'} \mid S_{t'-1} = g_{t'-1}, A_{t'- 1} = a_{t'-1})$$ for any $t' \in \tau_f$ with $g_{t'}, g_{t'-1}, a_{t'-1}$ being the ground truth states and action in the deterministic test environment. As such, $P_f(t') = 1$ in a deterministic environment. In a stochastic environment, $P_f(t') < 1$ for all $t'$ and as the stochasticity increases, these values decrease and approach 0. Let $$\delta_f(t') = \hat{P}_{\mathrm{train}}(g_{t'}|g_{t'-1},a_{t'-1}) - P_f(t')$$ where $g_{t'}, g_{t'-1}, a_{t'-1}$ are still the ground truth in the deterministic test environment. Define $\hat{P}_f(t') = P_f(t') + \delta_f(t')$. Similarly, define $$P_b(t') \coloneqq P_{\text{test}}(S_{t'} = g_{t'}\mid S_{t'+1} = g_{t'+1})$$ for any $t' \in \tau_b$ where $g_{t'}, g_{t'-1}$ are the ground truth states in the deterministic test environment. Then let $$\delta_b(t') = \hat{P}_\mathrm{train}(g_{t'}|g_{t'+1}) - P_b(t')$$ for any $t' \in \tau_b$. Define $\hat{P}_b(t') = P_b(t') + \delta_b(t')$. 

Intuitively, $\delta_f(t')$ and $\delta_b(t')$ characterize the error that the implicit dynamics model has in capturing the test environment's dynamics. 




\subsection{Consistency-Reactivity Inequalities}
\vspace{1em}

Now we prove the Consistency-Reactivity Inequalities. 

\vspace{1em}

\textbf{Proposition 1 }(Consistency-Reactivity Inequalities)
Let $\mathcal{L}$ be a non-linear and non-negative convex function measuring the prediction error with respect to demonstrations. Let $\mathcal{S}^+ \subset \{s_{t-k:t}\}$ be the states both the $(c, h)$ and the $(c, h+d)$ policies observe and let $\mathcal{S}^- \coloneqq \{s_{t-k:t}\}\setminus\mathcal{S}^+$. Let $C \coloneqq \{a_{t-h:t-1}\} \cup \mathcal{S}^+$, $G \coloneqq \{a_t, z_{t-k:t}\} \cup \{a_{t-h-d:t-h-1}\} \cup \mathcal{S}^-$. For notational ease, let $\tau_f = \{t-h-d+1:t-h\}$ and let $\tau_b = \{t-h-d-c:t-h-c-1\}$. Then, we can bound the expected loss of the $(c, h+d)$-policy and the $(c, h)$-policy as:
\begin{align*}
\alpha_{f} - \epsilon_{b}(1 - \prod_{t' \in \tau_b}P_b(t')(P_b(t') + \delta_b(t'))) \leq \min_{\pi_{h+d}} \mathbb{E}_{G} \left[\mathcal{L}(\pi_{h+d}, \pi^*) | C \right] - \min_{\pi_{h}} \mathbb{E}_{G} \left[\mathcal{L}(\pi_{h}, \pi^*) | C \right]
\end{align*}
and 
\begin{align*}
    \min_{\pi_{h+d}} \mathbb{E}_{G} \left[\mathcal{L}(\pi_{h+d}, \pi^*) | C \right] - \min_{\pi_{h}} \mathbb{E}_{G} \left[\mathcal{L}(\pi_{h}, \pi^*) | C \right] \leq  \epsilon_{f}(1 - \prod_{t' \in \tau_f}P_f(t')(P_f(t') + \delta_f(t'))) - \alpha_{b}.
\end{align*}


\begin{proof}
We first prove the upper bound. For ease of notation, we will write $x_{a:}^b$ to mean $x_{a:b}$. Additionally, for greater clarity, we will explicitly include the context length of each model, so $\pi_{(c, h)} = \pi_{h}$ and $\pi_{(c, h+d)} = \pi_{h+d}$. We start by writing, using Assumption 2, 
{\scriptsize
\begin{align*}
    & \pi_{(c, h+d)}(a_t|s_{t-h-d-c:t-h-d}, a_{t-h-d:t-1}) \\
    & = \pi_{(c, h+d)}(a_t|s_{t-h-d-c:}^{t-h-d}, a_{t-h-d:}^{t-1}) \\
    & = \mathbb{E}_{\hat{s}_{t-h-d+1:t-h}}\left[ \pi_{(c+d, h)}(a_t|s_{t-h-d-c:}^{t-h-d},\hat{s}_{t-h-d+1:}^{t-h}, a_{t-h-d:}^{t-1}) \Big| s_{t-h-d-c:}^{t-h-d}, a_{t-h-d:}^{t-1} \right]. \\
\end{align*}}

Using this, we expand the left hand side of our inequality: 
{\scriptsize
\begin{align*}
    & \min_{\pi_{(c, h+d)}} \mathbb{E}_{G} \left[ \mathcal{L}(\pi_{(c, h+d)}, \pi^*)| C \right] \\
    & = \min_{\pi_{(c+d, h)}} \mathbb{E}_{G} \left[\mathcal{L}(\mathbb{E}_{\hat{s}_{t-h-d+1:t-h}}\left[ \pi_{(c+d, h)} \Big| C \right], \pi^*)| C \right] \\
    & = \min_{\pi_{(c+d,h)}} \mathbb{E}_{G} \left[\mathcal{L}( \hat{P}_\mathrm{train}(g_{t-h-d+1:}^{t-h}|s_{t-h-d-c:}^{t-h-d},a_{t-h-d:}^{t-h-1})\pi_{(c+d, h)}(a_t|\cdots, g_{t-h-d+1:}^{t-h}) + \right.\\
    & \left. \sum_{\substack{\hat{s}_{t-h-d+1:t-h} \\ \text{not all } g_{t'}}}\hat{P}_\mathrm{train}(\hat{s}_{t-h-d+1:}^{t-h}|{s}_{t-h-d},a_{t-h-d:}^{t-h-1})\right. \left. \pi_{(c+d, h)}(a_t|\cdots, \hat{s}_{t-h-d+1:}^{t-h}), \pi^*)| C \right] \\
    & \leq \min_{\pi_{(c+d,h)}} \mathbb{E}_{G} \left[\mathcal{L}( \prod_{t' \in \tau_f}\hat{P}_{f}(t')\pi_{(c+d, h)}(a_t|\cdots, g_{t-h-d+1:}^{t-h}) + \right.\\
    & \left. \sum_{\substack{\hat{s}_{t-h-d+1:t-h} \\ \text{not all } g_{t'}}}\hat{P}_\mathrm{train}(\hat{s}_{t-h-d+1:}^{t-h}|{s}_{t-h-d},a_{t-h-d:}^{t-h-1})\right. \left. \pi_{(c+d, h)}(a_t|\cdots,\hat{s}_{t-h-d+1:}^{t-h}), \pi^*)| C \right] \\
\end{align*}}

where we computed the expectation \(\mathbb{E}_{\hat{s}_{t-h-d+1:t-h}}\left[ \pi_{(c+d,h)} \Big| C \right]\) by grouping into two terms : one where every \(\hat{s}_{t-h-d+1:t-h} = g_{t-h-d+1:t-h}\) and one where there is at least one term \(\hat{s}_i\) that is not \(g_i\). This grouping was done using the definition of noise in our environment. We introduce the following notation here  \[\hat{P}_{\neq g_{t'}} := \sum_{\substack{\hat{s}_{t-h-d+1:t-h} \\ \text{not all } g_{t'}}} \hat{P}_\mathrm{train}(\hat{s}_{t-h-d+1:}^{t-h}|{s}_{t-h-d},a_{t-h-d:}^{t-h-1}).\] Similarly, \[P_{\neq g_{t'}} := \sum_{\substack{s_{t-h-d+1:t-h} \\ \text{ not all } g_{t'}}} P_{\mathrm{test}}({s}_{t-h-d+1:}^{t-h}|{s}_{t-h-d},a_{t-h-d:}^{t-h-1}).\] With this notation, we continue our expansion:

{\scriptsize
\begin{align*}
    & \min_{\pi_{(c, h+d)}} \mathbb{E}_{G} \left[ \mathcal{L}(\pi_{(c, h+d)}, \pi^*)| C \right] \\
    & \leq \min_{\pi_{(c+d,h)}} \mathbb{E}_{G} \left[\mathcal{L}( \prod_{t' \in \tau_f}\hat{P}_{f}(t')\pi_{(c+d, h)}(a_t|\cdots, g_{t-h-d+1:}^{t-h}) + \right.\\
    & \left. \sum_{\substack{\hat{s}_{t-h-d+1:t-h} \\ \text{not all } g_{t'}}}\hat{P}_\mathrm{train}(\hat{s}_{t-h-d+1:}^{t-h}|{s}_{t-h-d},a_{t-h-d:}^{t-h-1})\right. \left. \pi_{(c+d, h)}(a_t|\cdots,\hat{s}_{t-h-d+1:}^{t-h}), \pi^*)| C \right] \\
    & \leq \min_{\pi_{(c+d,h)}} \mathbb{E}_{G} \left[\mathcal{L}( \prod_{t' \in \tau_f}\hat{P}_{f}(t')\pi_{(c+d, h)}(a_t|\cdots, g_{t-h-d+1:}^{t-h}) +  \hat{P}_{\neq g_{t'}}\right. \left. \pi_{(c+d, h)}(a_t|\cdots,\hat{s}_{t-h-d+1:}^{t-h}), \pi^*)| C \right] \\
    & \leq \min_{\pi_{(c+d,h)}} \mathbb{E}_{G} \left[ \prod_{t' \in \tau_f}\hat{P}_{f}(t') \mathcal{L}( \pi_{(c+d, h)}(a_t|\cdots, g_{t-h-d+1:}^{t-h}), \pi^*)| C \right] \\
    & +  \mathbb{E}_{G} \left[ \hat{P}_{\neq g_{t'}} \mathcal{L}(\pi_{(c+d, h)}(a_t|\cdots,\hat{s}_{t-h-d+1:}^{t-h}), \pi^*)| C \right]
\end{align*}}

where we got the inequality using the fact that $\mathcal{L}$ is a convex function and, thus, convex in each argument. Next, we take the expectation over $s_{t-h-d+1:t-h}$ by grouping the terms into two: one where every $s_{t-h-d+1:t-h} = g_{t-h-d+1:t-h}$ and one where there is at least one term $s_i \neq g_i$. Then, with some suppression of notation in the expression of $\pi_{(c+d, h)}$ and $G' \coloneqq G \setminus \{a_t, s_{t-h-d+1:t-h}\}$:

{\footnotesize
\allowdisplaybreaks
\begin{align*}
    & \min_{\pi_{(c,h+d)}} \mathbb{E}_G\left[ \mathcal{L}(\pi_{(c, h+d)}, \pi^*)| C\right] \\
    & \leq \min_{\pi_{(c+d,h)}} \mathbb{E}_{a_t} \\
    & \left[ \prod_{t' \in \tau_f}P_{f}^\mathrm{test}(t') \left. \hat{P}_{f}(t') \mathbb{E}_{G'}\left[ \mathcal{L}( \pi_{(c+d, h)}(...\hat{s}_{t-h-d+1:}^{t-h} = g_{t-h-d+1:}^{t-h}), \pi^*)  \Big|..., s_{t-h-d+1:}^{t-h} = g_{t-h-d+1:}^{t-h}\right] \right. \right. \\
    & + \left. P_{\neq g_{t'}} \left. \prod_{t' \in \tau_f} \hat{P}_{f}(t') \mathbb{E}_{G'}\left[ \mathcal{L}( \pi_{(c+d, h)}(...\hat{s}_{t-h-d+1:}^{t-h} = g_{t-h-d+1:}^{t-h}), \pi^*)  \Big|..., s_{t-h-d+1:}^{t-h} \neq g_{t-h-d+1:}^{t-h}\right] \right. \right. \\
    & + \prod_{t' \in \tau_f}P_{f}^\mathrm{test}(t')  \hat{P}_{\neq g_{t'}} \mathbb{E}_{G'}\left[ \mathcal{L}( \pi_{(c+d, h)}(...\hat{s}_{t-h-d+1:}^{t-h} \neq g_{t-h-d+1:}^{t-h}), \pi^*)  \Big|..., s_{t-h-d+1:}^{t-h} = g_{t-h-d+1:}^{t-h}\right] \\
    & +  P_{\neq g_{t'}}\hat{P}_{\neq g_{t'}} \mathbb{E}_{G'}\left[ \mathcal{L}( \pi_{(c+d, h)}(...,\hat{s}_{t-h-d+1:}^{t-h} \neq g_{t-h-d+1:}^{t-h}), \pi^*)  \Big|..., s_{t-h-d+1:}^{t-h} \neq g_{t-h-d+1:}^{t-h}\right] \\
\end{align*}
}

Now, we group all the terms into two - one representing where the learner's simulation matches the reality and one where it does not. Continuing from where we left off, first, define $\hat{P}_{{\hat{s}=s}}^f$ to be the probability that the inferred states at timesteps in $\tau_f$ are the states that were visited in reality. Then,

{\scriptsize
\allowdisplaybreaks
\begin{align*}
    & \min_{\pi_{(c,h+d)}} \mathbb{E}_G\left[ \mathcal{L}(\pi_{(c, h+d)}, \pi^*)| C\right] \\
    & \leq \min_{\pi_{(c+d,h)}} \mathbb{E}_{a_t} \\
    & \left[ \prod_{t' \in \tau_f}P_{f}^\mathrm{test}(t')\left. \hat{P}_{f}(t') \mathbb{E}\left[ \mathcal{L}( \pi_{(c+d, h)}(a_t|...,\hat{s}_{t-h-d+1:}^{t-h} = g_{t-h-d+1:}^{t-h} = {s}_{t-h-d+1:}^{t-h}), \pi^*)  \Big|..., s_{t-h-d+1:}^{t-h} = g_{t-h-d+1:}^{t-h}\right] \right. \right. \\
    & + \left. P_{\neq g_{t'}} \left. \hat{P}_{\hat{s}=s}^f \mathbb{E}\left[ \mathcal{L}( \pi_{(c+d, h)}(a_t|...,\hat{s}_{t-h-d+1:}^{t-h} = {s}_{t-h-d+1:}^{t-h} \neq g_{t-h-d+1:}^{t-h}), \pi^*)  \Big|..., s_{t-h-d+1:}^{t-h} \neq g_{t-h-d+1:}^{t-h}\right] \right. \right. \\
    & +  \prod_{t' \in \tau_f}P_{f}^\mathrm{test}(t') \hat{P}_{\neq g_{t'}} \mathbb{E}\left[ \mathcal{L}( \pi_{(c+d, h)}(a_t|...,\hat{s}_{t-h-d+1:}^{t-h} \neq g_{t-h-d+1:}^{t-h}={s}_{t-h-d+1:}^{t-h} ), \pi^*)  \Big|..., s_{t-h-d+1:}^{t-h} = g_{t-h-d+1:}^{t-h}\right] \\
    & +  P_{\neq g_{t'}}\prod_{t' \in \tau_f}\hat{P}_f(t')\mathbb{E}\left[ \mathcal{L}( \pi_{(c+d, h)}(a_t|...,\hat{s}_{t-h-d+1:}^{t-h} = g_{t-h-d+1:}^{t-h}\neq {s}_{t-h-d+1:}^{t-h} ), \pi^*)  \Big|..., s_{t-h-d+1:}^{t-h} \neq g_{t-h-d+1:}^{t-h}\right] \\
    & +  P_{\neq g_{t'}}\hat{P}_\mathrm{train}(\hat{s}_{t-h-d+1:}^{t-h} \neq s_{t-h-d+1:}^{t-h}, g_{t-h-d+1:}^{t-h}|s_{t-h-d},a_{t-h-d:}^{t-h-1}) \\
    & \mathbb{E}\left[ \mathcal{L}( \pi_{(c+d, h)}(a_t|...,\hat{s}_{t-h-d+1:}^{t-h} \neq {s}_{t-h-d+1:}^{t-h} \neq g_{t-h-d+1:}^{t-h}), \pi^*)  \Big|..., s_{t-h-d+1:}^{t-h} \neq g_{t-h-d+1:}^{t-h}\right] \left. \right| C, a_t \left. \left. \right] \mid| C \right] \\
\end{align*}
}

For the match terms, we use the fact that \(\prod_{t' \in \tau_f}P_{f}(t') \leq 1\) and \\ \(\hat{P}_{{\hat{s}=s}}^f = \hat{P}_\mathrm{train}(\hat{s}_{t-h-d+1:}^{t-h} = {s}_{t-h-d+1:}^{t-h}|{s}_{t-h-d},a_{t-h-d:}^{t-h-1}) \leq 1\). For the mismatch terms, we use the definition of $\epsilon_{f}$ and \cref{lemma:boundingerror}. Then, we continue:
{\footnotesize
\begin{align*}
    & \leq \min_{\pi_{(c+d,h)}} \mathbb{E}_G \left[ \mathcal{L}( \pi_{(c+d, h)}, \pi^*)  \Big| C\right] \tag*{(Simulation matches reality)} \\
    & + \prod_{t' \in \tau_f}P_f^\mathrm{test}(t')\left[ (1 - \hat{P}_f(t')) \epsilon_f \right] + \prod_{t' \in \tau_f}(1 - P_f^\mathrm{test}(t'))\left[\hat{P}_f(t') + \hat{P}_{\neq g_{t'}, s_{t'}} \right]\epsilon_f \tag*{(Simulation does not match reality)}.\\
\end{align*}
}

Recall that we write $P_f^\mathrm{test}(t')$ as simply $P_f(t')$. We simplify the mismatch terms further:
{\footnotesize
\allowdisplaybreaks
\begin{align*}
    & \prod_{t' \in \tau_f}P_f(t')\left[ (1 - \hat{P}_f(t')) \epsilon_f \right] + \prod_{t' \in \tau_f}(1 - P_f(t'))\left[\hat{P}_f(t') + \hat{P}_{\neq g_{t'}, s_{t'}} \right]\epsilon_f \\
    & \leq \prod_{t' \in \tau_f}P_f(t')\left[ (1 - \hat{P}_f(t')) \epsilon_f \right] + \prod_{t' \in \tau_f}(1 - P_f(t'))\left[\hat{P}_f(t') + (1 - \hat{P}_{f}(t')) \right]\epsilon_f \\
    & = \epsilon_f \cdot \left[ 1 - \prod_{t' \in \tau_f}P_f(t')\hat{P}_f(t')\right].\\
    & = \epsilon_f \cdot \left[ 1 - \prod_{t' \in \tau_f}P_f(t')(P_f(t') + \delta_f(t'))\right].
\end{align*}
}

Next, we simplify the match terms by using the definition of $\alpha_b$: 
{\footnotesize
\begin{align*}
    & \min_{\pi_{(c+d,h)}} \mathbb{E}_{G} \left[ \mathcal{L}( \pi_{(c+d, h)}, \pi^*) \Big| C \right] = \min_{\pi_{(c, h)}} \mathbb{E}_G \left[ \mathcal{L}(\pi_{(c, h)}, \pi^*) | C\right] - \alpha_b. \\
\end{align*}
}

Substituting these two terms back in, we conclude. 

Now, we prove the lower bound. We proceed in a manner similar to the proof of the upper bound. For ease of notation, we will write $x_{a:}^b$ to mean $x_{a:b}$. Additionally, for greater clarity, we will explicitly include the context length of each model, so $\pi_{(c, h)} = \pi_{h}$ and $\pi_{(c, h+d)} = \pi_{h+d}$. We start by writing, using Assumption 1,
{
\footnotesize
\allowdisplaybreaks
\begin{align*}
    & \min_{\pi_{(c, h)}} \mathbb{E}_G\left[ \mathcal{L}(\pi_{(c, h)}, \pi^{*}) \mid C\right] \\
    & =\min_{\pi_{(c+d, h)}} \mathbb{E}_G\left[ \mathcal{L}(\hat{P}_\mathrm{train}(g_{t-h-d-c:}^{t-h-c-1}|s_{t-h-c})\pi_{(c+d, h)} + \sum_{\substack{\hat{s}_{t-h-d-c:}^{t-h-c-1},\\ \text{ not all }g_{t'}}}\hat{P}_\mathrm{train}(\hat{s}_{t-h-d-c:}^{t-h-c-1} | s_{t-h-c})\pi_{(c+d, h)}, \pi^{*})\Big| C\right]. \\
    & \leq \min_{\pi_{(c+d, h)}} \mathbb{E}_G\left[ \prod_{t' \in \tau_b}\hat{P}_b(t') \mathcal{L}(\pi_{(c+d, h)}, \pi^{*}) + \sum_{\substack{\hat{s}_{t-h-d-c:}^{t-h-c-1},\\ \text{ not all }g_{t'}}}\hat{P}_\mathrm{train}(\hat{s}_{t-h-d-c:}^{t-h-c-1} | s_{t-h-c})\mathcal{L}(\pi_{(c+d, h)}, \pi^{*}) \Big| C\right]. \\
\end{align*}}

We introduce the following notation here  \[\hat{P}_{\neq g_{t'}} := \sum_{\substack{\hat{s}_{t-h-d-c:t-h-c-1} \\ \text{not all } g_{t'}}} \hat{P}_\mathrm{train}(\hat{s}_{t-h-d-c:}^{t-h-c-1}|{s}_{t-h-c}).\] Similarly, \[P_{\neq g_{t'}} := \sum_{\substack{s_{t-h-d-c:t-h-c-1} \\ \text{ not all } g_{t'}}} P_\mathrm{test}({s}_{t-h-d-c:}^{t-h-c-1}|{s}_{t-h-c}).\] With this notation, we continue our expansion:

{\footnotesize
\begin{align*}
    & \min_{\pi_{(c, h)}} \mathbb{E}_{G} \left[ \mathcal{L}(\pi_{(c, h)}, \pi^*)| C \right] \\
    & \leq \min_{\pi_{(c+d,h)}} \mathbb{E}_{G}\left[ \prod_{t' \in \tau_b}\hat{P}_{b}(t') \mathcal{L}(\pi_{(c+d, h)}(a_t|..., g_{t-h-d-c:}^{t-h-c-1}), \pi^*) \mid C  \right] + \\
    & \hspace{5mm} \left. \mathbb{E}_{G}\left[ \hat{P}_{\neq g_{t'}}  \mathcal{L}( \right.\pi_{(c+d, h)}(a_t|...,\hat{s}_{t-h-d-c:}^{t-h-c-1} \neq g_{t-h-d-c:}^{t-h-c-1}), \pi^*) \mid C\right] \\
\end{align*}}

where we got the inequality using the fact that $\mathcal{L}$ is a convex function. Next, we take the expectation over $s_{t-h-d-c:t-h-c-1}$ by grouping the terms into two: one where every $s_{t-h-d-c:t-h-c-1} = g_{t-h-d-c:t-h-c-1}$ and one where there is at least one term $s_i \neq g_i$. Then, again suppressing some terms inside the expression of $\pi_{(c+d, h)}$:

{\footnotesize
\begin{align*}
    & \min_{\pi_{(c, h)}} \mathbb{E}_{G} \left[ \mathcal{L}(\pi_{(c, h)}, \pi^*)| C \right] \\
    & \leq \min_{\pi_{(c+d,h)}} \mathbb{E}_{a_t} \\
    & \left[ \prod_{t' \in \tau_b}P_{b}(t')\hat{P}_b(t') \left. \mathbb{E} \left[ \mathcal{L}( \pi_{(c+d, h)}(...,\hat{s}_{t-h-d-c:}^{t-h-c-1} = g_{t-h-d-c:}^{t-h-c-1} = {s}_{t-h-d-c:}^{t-h-c-1}), \pi^*)  \Big|...,s_{t-h-d-c:}^{t-h-c-1} = g_{t-h-d-c:}^{t-h-c-1} \right. \right. \right] \\
    & + P_{\neq g_{t'}} \left. \prod_{t' \in \tau_b}\hat{P}_{b}(t') \mathbb{E} \left[ \mathcal{L}( \pi_{(c+d, h)}(...,\hat{s}_{t-h-d-c:}^{t-h-c-1} = g_{t-h-d-c:}^{t-h-c-1} \neq s_{t-h-d-c:}^{t-h-c-1}), \pi^*)  \Big|..., s_{t-h-d-c:}^{t-h-c-1} \neq g_{t-h-d-c:}^{t-h-c-1} \right. \right]\\ 
    & + \prod_{t' \in \tau_b}P_{b}(t') \hat{P}_{\neq g_{t'}} \left.  \mathbb{E} \left[ \mathcal{L}( \pi_{(c+d, h)}(...,\hat{s}_{t-h-d-c:}^{t-h-c-1} \neq g_{t-h-d-c:}^{t-h-c-1} = s_{t-h-d-c:}^{t-h-c-1}), \pi^*)  \Big|..., s_{t-h-d-c:}^{t-h-c-1} = g_{t-h-d-c:}^{t-h-c-1} \right. \right]\\
    & + {P}_{\neq g_{t'}} \hat{P}_{\neq g_{t'}} \left.  \mathbb{E} \left[ \mathcal{L}( \pi_{(c+d, h)}(...,\hat{s}_{t-h-d-c:}^{t-h-c-1} \neq g_{t-h-d-c:}^{t-h-c-1}), \pi^*)  \Big|..., s_{t-h-d-c:}^{t-h-c-1} \neq g_{t-h-d-c:}^{t-h-c-1} \left. \right] \mid C, a_t \right] \mid C \right]\\
\end{align*}
}

Now, we group all the terms into two - one representing where the learner's simulation matches the reality and one where it does not. Continuing from where we left off and defining $\hat{P}_{\hat{s}=s}^b \coloneqq \hat{P}_{\mathrm{train}}(\hat{s}_{t-h-d-c:}^{t-h-c-1}= {s}_{t-h-d-c:}^{t-h-c-1}|s_{t-h-c})$:

{\scriptsize
\begin{align*}
    & \min_{\pi_{(c, h)}} \mathbb{E}_{G} \left[ \mathcal{L}(\pi_{(c, h)}, \pi^*)| C \right] \\
    & \leq \min_{\pi_{(c+d,h)}} \mathbb{E}_{a_t} \\
    & \left[ \prod_{t' \in \tau_b} P_{b}(t') \left. \hat{P}_{b}(t') \mathbb{E} \left[ \mathcal{L}( \pi_{(c+d, h)}(a_t|...,\hat{s}_{t-h-d-c:}^{t-h-c-1} = g_{t-h-d-c:}^{t-h-c-1}), \pi^*)  \Big|...,s_{t-h-d-c:}^{t-h-c-1} = g_{t-h-d-c:}^{t-h-c-1} \right] \right. \right. \\
    & + P_{\neq g_{t'}} \left. \hat{P}_{\hat{s}=s}^b  \mathbb{E} \left[ \mathcal{L}( \pi_{(c+d, h)}(a_t|...,\hat{s}_{t-h-d-c:}^{t-h-c-1} = s_{t-h-d-c:}^{t-h-c-1}), \pi^*)  \Big|..., s_{t-h-d-c:}^{t-h-c-1} \neq g_{t-h-d-c:}^{t-h-c-1} \right. \right]\\ 
    & + \prod_{t' \in \tau_b}P_{b}(t') \hat{P}_{\neq g_{t'}} \left.  \mathbb{E} \left[ \mathcal{L}( \pi_{(c+d, h)}(a_t|...,s_{t-h-d-c:}^{t-h-c-1} \neq g_{t-h-d-c:}^{t-h-c-1}), \pi^*)  \Big|..., s_{t-h-d-c:}^{t-h-c-1} = g_{t-h-d-c:}^{t-h-c-1} \right. \right]\\
    & + {P}_{\neq g_{t'}}\prod_{t' \in \tau_b} \hat{P}_b(t') \left.  \mathbb{E} \left[ \mathcal{L}( \pi_{(c+d, h)}(a_t|...,\hat{s}_{t-h-d-c:}^{t-h-c-1} = g_{t-h-d-c:}^{t-h-c-1}), \pi^*)  \Big|..., s_{t-h-d-c:}^{t-h-c-1} \neq g_{t-h-d-c:}^{t-h-c-1} \left. \right] \mid C, a_t \right] \mid C \right]\\
    & + {P}_{\neq g_{t'}} \hat{P}_\mathrm{train}( \hat{s}_{t-h-d-c:}^{t-h-c-1} \neq g_{t-h-d-c:}^{t-h-c-1}, {s}_{t-h-d-c:}^{t-h-c-1} \mid s_{t-h-c}) \\
    & \left. \mathbb{E} \left[ \mathcal{L}( \pi_{(c+d, h)}(a_t|...,\hat{s}_{t-h-d-c:}^{t-h-c-1} \neq s_{t-h-d-c:}^{t-h-c-1}), \pi^*)  \Big|..., s_{t-h-d-c:}^{t-h-c-1} \neq g_{t-h-d-c:}^{t-h-c-1} \left. \right] \mid C, a_t \right] \mid C \right]\\
\end{align*}
}

For the match terms, we use the fact that \(\prod_{t' \in \tau_b}\hat{P}_{b}(t') \leq 1\) and \\ \(\hat{P}_\mathrm{train}(\hat{s}_{t-h-d-c:}^{t-h-c-1} = {s}_{t-h-d-c:}^{t-h-c-1}|{s}_{t-h-c:}^{t-h}) \leq 1\). For the mismatch terms, we use the definition of $\epsilon_{b}$ and \cref{lemma:boundingerror}. Then, we continue:
{\footnotesize
\begin{align*}
    & \leq \min_{\pi_{(c+d,h)}} \mathbb{E}_G  \left[ \mathcal{L}( \pi_{(c+d, h)}, \pi^*)  \Big| C \right] \tag*{(Simulation matches reality)}\\
    & + \prod_{t' \in \tau_b}P_b(t')(1 - \hat{P}_b(t'))\epsilon_b + \prod_{t' \in \tau_b}(1 - P_b(t'))(\hat{P}_b(t') + \hat{P}_{\neq g_{t'},s_{t'}})\epsilon_b\tag*{(Simulation does not match reality)}.\\
\end{align*}
}

We simplify the mismatch terms further:
{\footnotesize
\allowdisplaybreaks
\begin{align*}
    & \prod_{t' \in \tau_b}P_b(t')(1 - \hat{P}_b(t'))\epsilon_b + \prod_{t' \in \tau_b}(1 - P_b(t'))(\hat{P}_b(t') + \hat{P}_{\neq g_{t'},s_{t'}})\epsilon_b \\
    & \leq \prod_{t' \in \tau_b}P_b(t')(1 - \hat{P}_b(t'))\epsilon_b + \prod_{t' \in \tau_b}(1 - P_b(t'))(\hat{P}_b(t') + (1 - \hat{P}_b(t')))\epsilon_b \\
    & = \epsilon_b \cdot \left[ 1 - \prod_{t' \in \tau_b}P_b(t')\hat{P}_b(t')\right] \\
    & = \epsilon_b \cdot \left[ 1 - \prod_{t' \in \tau_b}P_b(t')(P_b(t') + \delta_b(t'))\right] \\.
\end{align*}
}

Next, we simplify the match terms by using the definition of $\alpha_f$ which follows from \cref{contextvaluablev2} in a manner similar to the definition of $\alpha_b$: 
{\footnotesize
\begin{align*}
    & \min_{\pi_{(c+d,h)}} \mathbb{E}_G  \left[ \mathcal{L}( \pi_{(c+d, h)}, \pi^*)  \Big| C \right]\\
    & = \min_{\pi_{(c, h+d)}} \mathbb{E}_G \left[ \mathcal{L}(\pi_{(c, h+d)}, \pi^*) | C\right] - \alpha_{f} \tag*{(\cref{contextvaluablev2})}. \\
\end{align*}
}
We substitute these terms back in to get the desired bound.
\end{proof}

\vspace{1em}

Now, we prove \cref{PRIInequalityLowNoise} as a direct consequence of the Consistency-Reactivity Inequalities. Recall that in a near-deterministic environment, $P_f$ is close to 1 as the transitions are purely determined.

\vspace{1em}

\textbf{Corollary 2 }(Consistency). Suppose, the train and test environments are the same and it is deterministic. Suppose $a_t$ is influenced by at least one state at time steps $\tau_b$ and let $\delta_f(t') \approx 0$ for all $t' \in \tau_f$. If the diversity in past strategies is not 0, and $\epsilon_{f}$ is finite, then
\begin{align*}
\min_{\pi_{h+d}} \mathbb{E}_{G} \left[\mathcal{L}(\pi_{h+d}, \pi^*)| C \right] < & \min_{\pi_{h}} \mathbb{E}_{G}\left[\mathcal{L}(\pi_{h}, \pi^*)| C \right].
\end{align*}

\begin{proof}
    This follows from the upper bound of \cref{PRInequality}. Since the test environment is deterministic, we take $P_{f}(t') \approx 1$ and, by hypothesis, $\delta_f(t') \approx 0$ for all $t' \in \tau_f$, menaing our learned policy's implicit dynamics model is accurate. Then, we get $$\min_{\pi_{h+d}} \mathbb{E}_{G} \left[
    \mathcal{L}(\pi_{h+d}, \pi^*)\mid C \right] 
    - \min_{\pi_{h}} \mathbb{E}_{G} \left[\mathcal{L}(\pi_{h}, \pi^*)\mid C \right] \leq -\alpha_b + \epsilon_f(1 - \prod_{t' \in \tau_f}P_f(t')(P_f(t') + \delta_f(t'))) = - \alpha_b$$ since $\epsilon_f$ is finite. All that remains to show is that $\alpha_b$ is positive. Note that $a_t$ is temporally dependent on at least one state in $\{s_{t-h-c-d:t-h-c-1}\}$. Furthermore, since diversity in past strategies is not 0, the states in timesteps $\tau_b$ cannot be predicted with probability 1. Then, by \cref{contextvaluable}, we have that $\alpha_b > 0$. Therefore, $$\min_{\pi_{h+d}} \mathbb{E}_{G} \left[\mathcal{L}(\pi_{h+d}, \pi^*)\mid C \right] - \min_{\pi_{h}} \mathbb{E}_{G} \left[\mathcal{L}(\pi_{h}, \pi^*)\mid C \right] \leq -\alpha_b < 0.$$ 
\end{proof}
\vspace{1em}
Next, we prove \cref{PRInqualityHighNoise}.

\vspace{1em}

\textbf{Corollary 3}
Suppose $P_f(t')$ is small or both $P_f(t')$ and $\left|\delta_f(t')\right|$ are large for all $t' \in \tau_f$. If temporal dependency decreases over time such that $\epsilon_b$ is small and $a_t$ is influenced by at least one state at time steps in $\tau_f$, then
\begin{align*}
\min_{\pi_{h+d}} \mathbb{E}_{G} \left[
\mathcal{L}(\pi_{h+d}, \pi^*)| C \right] 
> & \min_{\pi_{h}} \mathbb{E}_{G} \left[\mathcal{L}(\pi_{h}, \pi^*)| C \right].
\end{align*}

\begin{proof}

Notice that since $\epsilon_b \approx 0$, the lower bound in \cref{PRInequality} is $\alpha_f$. Next, we show that that $\alpha_f$ is positive. If $P_f(t')$ is small, then we know by \cref{contextvaluablev2} that $\alpha_f$ is positive. On the other hand, if $P_f(t') \approx 1$ but $\left|\delta_f(t')\right|$ is large, then since probabilities are bounded, we require $\delta_f(t') \approx -1$.  In other words, the inference of the correct unobserved state is very unlikely causing $\alpha_f$ to be positive again. Therefore, 
{\footnotesize
\begin{align*}
    \min_{\pi_{h}} \mathbb{E}_{G} \left[\mathcal{L}(\pi_{h} - \pi^*)| C \right] < \min_{\pi_{h+d}} \mathbb{E}_G \left[ \mathcal{L}(\pi_{h+d} - \pi^*) | C\right]
\end{align*}}

\end{proof}

\vspace{1em}

\revision{To sanity check the Consistency-Reactivity Inequalities, we prove the next proposition. This shows that the right-hand side of the inequalities is greater than or equal to the left-hand side. 

\vspace{1em}
\begin{proposition}
    In all environments, $$-\epsilon_b(1 - \prod_{t' \in \tau_b}P_b(t')(P_b(t') + \delta_b(t'))) \leq - \alpha_b $$ and $$\alpha_f \leq \epsilon_f(1 - \prod_{t' \in \tau_f} P_f(t')(P_f(t') + \delta_f(t'))).$$
\end{proposition}
\begin{proof}
    We prove the first inequality; the proof of the second proceeds similarly. Using the definition of $\alpha_b$ and suppressing $G$ and $C$ (as in \cref{PRInequality}) for clarity, we have:
    {\allowdisplaybreaks
    \begin{align*}
        \alpha_b &= - \min_{\pi_{h+d}} \mathbb{E}[\mathcal{L}(\pi_{h+d}, \pi^*)] + \min_{\pi_{h}} \mathbb{E}[\mathcal{L}(\pi_{h}, \pi^*)] \\
        &\leq - \min_{\pi_{h+d}} \mathbb{E}[\mathcal{L}(\pi_{h+d}, \pi^*)] \\
        &\quad + \min_{\pi_{h+d}} \mathbb{E}\Bigl[\mathcal{L}\Bigl(P_{\text{correct inference}} \pi_{h+d}(\text{correct inference}) \\
        &\qquad + P_{\text{incorrect inference}} \pi_{h+d}(\text{incorrect inference}), \pi^*\Bigr)\Bigr] \\
        &\leq - \min_{\pi_{h+d}} \mathbb{E}[\mathcal{L}(\pi_{h+d}, \pi^*)] \\
        &\quad + \min_{\pi_{h+d}} \mathbb{E}\Bigl[P_{\text{correct inference}} \mathcal{L}\bigl(\pi_{h+d}(\text{correct inference}), \pi^*\bigr) \\
        &\qquad + P_{\text{incorrect inference}} \mathcal{L}\bigl(\pi_{h+d}(\text{incorrect inference}), \pi^*\bigr)\Bigr] \\
        &\leq - \min_{\pi_{h+d}} \mathbb{E}[\mathcal{L}(\pi_{h+d}, \pi^*)] \\
        &\quad + \min_{\pi_{h+d}} \Bigl(\mathbb{E}\bigl[ \mathcal{L}\bigl(\pi_{h+d}(\text{correct inference}), \pi^*\bigr)\bigr]\Bigr) \\
        &\qquad + \mathbb{E}\bigl[P_{\text{incorrect inference}} \mathcal{L}\bigl(\pi_{h+d}(\text{incorrect inference}), \pi^*\bigr)\bigr] \\
        &\leq - \min_{\pi_{h+d}} \mathbb{E}[\mathcal{L}(\pi_{h+d}, \pi^*)] \\
        &\quad + \min_{\pi_{h+d}} \Bigl(\mathbb{E}\bigl[\mathcal{L}\bigl(\pi_{h+d}(\text{correct inference}), \pi^*\bigr)\bigr]\Bigr) \\
        &\qquad + \epsilon_b(1 - \prod_{t' \in \tau_b}P_b(t')(P_b(t')  \delta_b(t')))\\
        & = \epsilon_b(1 - \prod_{t' \in \tau_b}P_b(t')(P_b(t') + \delta_b(t')))\\
    \end{align*}
    }
\end{proof}}

\subsection{Closed-Loop versus Open-Loop in Highly Stochastic and Near-Deterministic Environments}
\label{app:openloopvsclosedloop}
\vspace{1em}

The Consistency-Reactivity Inequalities allow us to make an even stronger statement when we compare strictly closed-loop policies with open-loop ones. Consider the same set-up as before with $h=0$. Thus, $\pi_{(c, 0)}$ represents a closed-loop policy whereas $\pi_{(c, d)}$ represents an open-loop one. We can compare these policies' divergences with the expert across the entire trajectory in the limiting cases of the environment stochasticity. 

\begin{corollary}
    Suppose $P_f(t')$ is small or both $P_f(t')$ and $\left|\delta_f(t')\right|$ are large for all $t' \in \tau_f$. If temporal dependency decreases over time such that $\epsilon_b$ is small and $a_t$ is influenced by at least one state at time steps in $\tau_f$, then the expected divergence between the closed-loop policy over the full trajectory is lower than that between the open-loop policy and the expert. 
    
    Suppose, the train and test environments are the same and it is deterministic. Suppose $a_t$ is influenced by at least one state at time steps $\tau_b$ and let $\delta_f(t') \approx 0$ for all $t' \in \tau_f$. If the diversity in past strategies is not 0, and $\epsilon_{f}$ is finite, then the divergence between the closed-loop policy over the full trajectory is greater than that between the open-loop policy and the expert.
    \label{openloopvsclosedloop}
\end{corollary}

\begin{proof}
    At any arbitrary time step $t$, the chunks of the two policies can be aligned in one of two ways:

    {Case 1:} $\pi_{(c, 0)}$ is executing $a_t$ as the first action in its action chunk and $\pi_{(c, d)}$ is also executing $a_t$ as the first action in its action chunk. 

    {Case 2:} $\pi_{(c, 0)}$ is executing $a_t$ as the first action in its action chunk and $\pi_{(c, d)}$ is executing $a_t$ as the $k$-th action, where $k \in (1, 1+d]$ in its action chunk. 

    Using the Consistency-Reactivity Inequalities, in Case 1, both policies have equal divergence. However, in case 2, using Corollary 3, we know that the closed-loop policy will outperform the open-loop one in the first setting of the statement and open-loop will outperform in the second. From this we can conclude the divergence across the full trajectory.
\end{proof}

\end{document}